\pgfplotsset{compat=1.13}
\crefname{section}{Sec.}{Secs.}
\crefname{appendix}{App.}{Apps.}
\newcommand{\Func}[2]{#1\!\left(#2\right)}
\newcommand{\FuncSq}[2]{#1\!\left[ #2 \right]}
\newcommand{\CondFunc}[3]{#1\!\left(#2 \, \middle\vert \, #3 \right)}
\newcommand{\Prob}[1]{\Func{p}{#1}}
\newcommand{\Cond}[2]{\CondFunc{p}{#1}{#2}}
\newcommand{\qFunc}[1]{\Func{q}{#1}}
\newcommand{\qFuncSpaced}[1]{\Func{q\,}{#1}}
\newcommand{\dNorm}[1]{\Func{\mathcal{N}}{#1}}
\newcommand{\dGP}[1]{\Func{\mathcal{GP}}{#1}}
\newcommand{\pdfNorm}[2]{\Func{\mathcal{N}}{#1; #2}}
\newcommand{\Expect}[2]{\FuncSq{\mathbb{E}_{#1}}{#2}}
\newcommand{\DivKL}[2]{\FuncSq{\mathcal{KL}}{#1\!\mid\!\mid\! #2}}
\newcommand{\tr}[1]{\Func{\text{tr}}{#1}}
\newcommand{\Ident}{\mathbf{I}}
\newcommand{\inv}[1]{#1^{-1}}
\newcommand{\invbr}[1]{\left(#1\right)^{-1}}
\newcommand{\bigO}[1]{\Func{\mathcal{O}}{#1}}
\newcommand{\cov}[1]{\Func{\text{cov}}{#1}}
\newcommand{\bigCI}{\mathrel{\text{\scalebox{1.07}{$\perp\mkern-10mu\perp$}}}}
\newcommand{\transposesymbol}{\top}
\newcommand{\transpose}[1]{#1^\transposesymbol}
\newcommand{\invtranspose}[1]{#1^{-\transposesymbol}}
\newcommand{\kron}{\otimes}
\newtheorem{theorem}{Theorem}[section]
\newtheorem{lemma}[theorem]{Lemma}
\newcommand{\meanfunc}{m}
\newcommand{\kernel}{\kappa}
\newcommand{\inputdomain}{\mathcal{X}}
\newcommand{\reals}{\mathbb{R}}
\newcommand{\meanvec}{\mathbf{m}}
\newcommand{\covmat}{\mathbf{C}}
\renewcommand{\vec}[1]{\mathbf{#1}}
\newcommand{\spacevar}{\vec{r}}
\newcommand{\timevar}{\tau}
\newcommand{\timeprime}{\timevar^\prime}
\newcommand{\spaceprime}{\spacevar^\prime}
\newcommand{\yobs}{\vec{y}}
\newcommand{\zvec}{\mathbf{z}}
\newcommand{\uobs}{\mathbf{u}}
\newcommand{\fobs}{\mathbf{f}}
\newcommand{\fnotu}{{f_{\neq \uobs}}}
\newcommand{\Amat}{\mathbf{A}}
\newcommand{\meanvecq}{\meanvec^{\textrm{q}}}
\newcommand{\covmatq}{\covmat^{\textrm{q}}}
\newcommand{\precq}{\Lambda^{\textrm{q}}}
\newcommand{\meanqu}{\meanvecq_\uobs}
\newcommand{\covqu}{\covmatq_\uobs}
\newcommand{\meanq}{\hat{\meanvec}}
\newcommand{\xinp}{\vec{x}}
\newcommand{\ifSubfilesClassLoaded}{%
    \expandafter\ifx\csname ver@subfiles.cls\endcsname\relax
        \expandafter\@secondoftwo
    \else
        \expandafter\@firstoftwo
    \fi
}
\newcommand{\transition}{\mathbf{A}}
\newcommand{\transitionvar}{\mathbf{Q}}
\newcommand{\emission}{\mathbf{H}}
\newcommand{\emissionvar}{\mathbf{S}}
\newcommand{\Bmat}{\mathbf{B}}
\newcommand{\Vmat}{\mathbf{V}}
\newcommand{\elbo}{\mathcal{L}}
\newcommand{\inner}[3]{\langle #2, #3\rangle_{#1}}
\newcommand{\Mpertime}{M_\timevar}
\newcommand{\pseudoinp}{\vec{z}}
\newcommand{\fpred}{\fobs_\ast}
\newcommand{\faux}{\bar{f}}
\newcommand{\fobsaux}{\bar{\fobs}}
\newcommand{\uobsaux}{\bar{\uobs}}
\newcommand{\kernelaux}{\bar{\kernel}}
\newcommand{\precopt}{\hat{\Lambda}^\ast}
\newcommand{\prect}{\mathbf{G}}
\newcommand{\CondProbApprox}[2]{\CondFunc{\tilde{p}}{#1}{#2}}
\newcommand{\xvec}{\mathbf{x}}
\newcommand{\yvec}{\mathbf{y}}
\newcommand{\avec}{\mathbf{a}}
\newcommand{\Qmat}{\mathbf{Q}}
\newcommand{\Umat}{\mathbf{U}}
\newcommand{\dimx}{D_x}
\newcommand{\dimy}{D_y}
\newcommand{\meanx}{\meanvec_{\xvec}}
\newcommand{\covx}{\covmat_{\xvec}}
\newcommand{\Cmat}{\mathbf{C}}
\newcommand{\meanxpost}{\meanvec_{\xvec \mid \yvec}}
\newcommand{\covxpost}{\covmat_{\xvec \mid \yvec}}
\newcommand{\meany}{\meanvec_{\yvec}}
\newcommand{\covy}{\covmat_{\yvec}}
\newcommand{\covyapprox}{\tilde{\covmat}_{\yvec}}
\newcommand{\dimvar}{d}
\newcommand{\domainX}{\mathcal{X}}
\newcommand{\domainY}{\mathcal{Y}}
\newcommand{\kernelx}{\kernel_{\textrm{x}}}
\newcommand{\kernely}{\kernel_{\textrm{y}}}
\newcommand{\setA}{\Func{f}{\domainX_1, \domainY_1}}
\newcommand{\setB}{\Func{f}{\domainX_2, \domainY_2}}
\newcommand{\setC}{\Func{f}{\domainX_2, \domainY_1}}
\newcommand{\spacedomain}{\inputdomain}
\newcommand{\timedomain}{\reals}
\newcommand{\dimdomain}{\{1, ..., D\}}
\newcommand{\approxobs}{\yobs^{\textrm{q}}}
\newcommand{\approxprec}{\Lambda^{\textrm{q}}}
\newcommand{\psdmats}{\mathbb{S}_+}
\newlength{\figurewidth}
\newlength{\figureheight}
\tikzset{help lines/.style={color=blue!50, thin}}
\tikzset{observed/.style={
    circle,
    draw,
    fill=gray!25,
    minimum size=1cm
}}
\tikzset{latent/.style={
    circle,
    draw,
    minimum size=1cm
}}
\tikzset{help lines/.style={color=blue!50, thin}}
\tikzstyle{bluecircle}=[rectangle, draw=blue, fill=blue, minimum size=4mm]
\tikzstyle{redcircle}=[rectangle, draw=red, fill=white, minimum size=4mm, fill opacity=1, very thick]
\tikzstyle{blackcircle}=[circle, draw=black, fill=black, minimum size=1mm]
\title{    Combining Pseudo-Point and State Space Approximations \\
    for Sum-Separable Gaussian Processes}
\author[1]{\href{mailto:Will Tebbutt <wct23@cam.ac.uk>?Subject=Your UAI 2021 paper}{Will Tebbutt}{}} 
\author[2]{Arno Solin}
\author[1]{Richard E. Turner}
\affil[1]{%
    University of Cambridge, UK
}
\affil[2]{%
    Aalto University, Finland
}
\begin{document}
\maketitle

\begin{abstract}
    Gaussian processes (GPs) are important probabilistic tools for inference and learning in spatio-temporal modelling problems such as those in climate science and epidemiology. However, existing GP approximations do not simultaneously support large numbers of off-the-grid spatial data-points and long time-series which is a hallmark of many applications.
    Pseudo-point approximations, one of the gold-standard methods for scaling GPs to large data sets, are well suited for handling off-the-grid spatial data. However, they cannot handle long temporal observation horizons effectively reverting to cubic computational scaling in the time dimension. State space GP approximations are well suited to handling temporal data, if the temporal GP prior admits a Markov form, leading to linear complexity in the number of temporal observations, but have a cubic spatial cost and cannot handle off-the-grid spatial data.
    In this work we show that there is a simple and elegant way to combine pseudo-point methods with the state space GP approximation framework to get the best of both worlds. The approach hinges on a surprising conditional independence property which applies to space--time separable GPs. We demonstrate empirically that the combined approach is more scalable and applicable to a greater range of spatio-temporal problems than either method on its own.
\end{abstract}

\section{Introduction}

\begin{figure}[t]
  \centering
  \begin{tikzpicture}[outer sep=0,inner sep=0]
    \node[minimum width=\columnwidth,minimum height=.66\columnwidth,rounded corners=2mm,path picture={
      \node at (path picture bounding box.center){
        \includegraphics[width=\columnwidth,trim=12 35 95 35,clip]{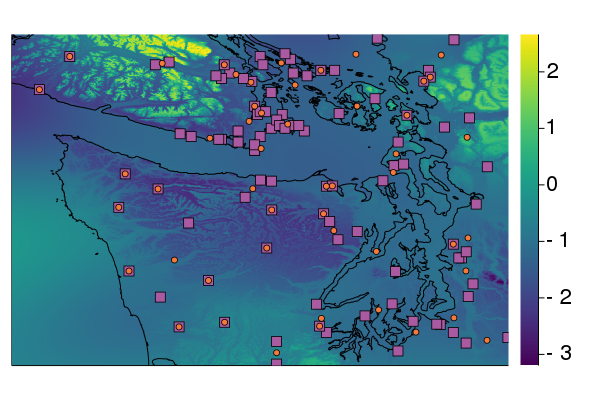}
      };}] at (0,0) {};
  \end{tikzpicture}
  \caption{Spatial slice of a large-scale spatio-temporal modelling problem: The posterior mean belief over max temperature (standardised scale, $-3$~\protect\includegraphics[width=1cm,height=2.5mm]{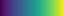}~$3$) on a day in early 2020 around Seattle and Vancouver. Pink squares are weather stations, orange dots are pseudo-points.}
  \label{fig:posterior-mean-weather-station}
\end{figure}

Large spatio-temporal data containing millions or billions of observations arise in various domains, such as climate science.
While Gaussian process (GP) models \citep{williams2006gaussian} can be useful models in such settings, the computational expense of exact inference is typically prohibitive, necessitating approximation.
This work combines two classes of approximations with complementary strengths and weaknesses to tackle spatio-temporal problems: pseudo-point \citep{quinonero2005unifying,bui2017unifying} and state-space \citep{sarkka2013spatiotemporal,sarkka2019applied} approximations.
\cref{fig:posterior-mean-weather-station} shows a single time-slice of a spatio-temporal model for daily maximum temperature, which extrapolates from fixed weather stations, constructed using this technique.

This work hinges on a conditional independence property possessed by separable GPs.
This property was identified by \citet{o1998markov}, and appears to have gone largely unnoticed within the GP community.
This property, in conjunction with the imposition of some structure on the pseudo-point locations, yields a collection of methods for approximate inference algorithm which scale linearly in time, the same as standard pseudo-point methods in space, and which can be implemented straightforwardly by utilising standard Kalman filtering-like algorithms.

In particular, we show
{\em (i)}~how O'Hagan's conditional independence property can be exploited to significantly accelerate the variational inference scheme of \citet{titsias2009variational} for GPs with separable and sum-separable kernels,
{\em (ii)}~how this can be straightforwardly combined with the Markov property exploited by state space approximations \citep{sarkka2019applied} to obtain an accurate approximate inference algorithm for sum-separable spatio-temporal GPs that scales linearly in time, and 
{\em (iii)}~how the earlier work of \citet{hartikainen2011sparse} on this topic is more closely related to the pseudo-point work of \citet{csato2002sparse} and \citet{snelson2005sparse} than previously realised.

\section{Sum-Separable Spatio-Temporal GPs}
\label{sec:sep-spat-temp-gps}

We call a GP \textit{separable across space and time} if its kernel is of the form
\begin{equation}
    \Func{\kernel}{(\spacevar, \timevar), (\spaceprime, \timeprime)} = \Func{\kernel^\spacevar}{\spacevar, \spaceprime} \Func{\kernel^\timevar}{\timevar, \timeprime} \label{eqn:separable-kernel}
\end{equation}
where $\spacevar, \spacevar^\prime \in \inputdomain$ are spatial inputs and $\timevar, \timevar^\prime \in \reals$ are temporal inputs.
We also call kernels such as $\kernel$ separable.
There is no particular restriction on what we define $\inputdomain$ to be -- it could be 3-dimensional Euclidean space in the literal sense, or it could be something else, such as a graph or the surface of a sphere.
Moreover, we place no restrictions on the form of $\kernel^\spacevar$, in particular we do not require it to be separable. Similarly, while the temporal inputs must be in $\reals$, it is irrelevant whether this dimension actually corresponds to time or to something else entirely.

This work considers a generalisation of separable GPs that we call \textit{sum-separable across space and time}, or simply \textit{sum-separable}.
We call a GP \textit{sum-separable} if it can be sampled by summing samples from a collection of independent separable GPs.
Specifically, let $f_p \sim \dGP{0, \kernel_p}$, $p=\{1, ..., P\}$, be a collection of $P$ independent separable GPs with kernels $\kernel_p$, and $f := \sum\nolimits_{p=1}^P f_p$, then $f$ is sum-separable.
$f$ has kernel
\begin{equation}
    \Func{\kernel}{(\spacevar, \timevar), (\spacevar^\prime, \timevar^\prime)} = \sum\nolimits_{p=1}^P \Func{\kernel_p}{(\spacevar, \timevar), (\spacevar^\prime, \timevar^\prime)},
\end{equation}
which is \emph{not} separable, meaning that sum-separable GPs such as $f$ are not generally separable.
In fact they are a much more expressive family of models, as they can represent processes which vary on multiple length scales in space and time.
Note that these are also distinct from additive GPs \citep{duvenaud2011additive} since each function depends on both space and time.

\section{Pseudo-Point Approximations}
\label{sec:vfe}

Pseudo-point approximations tackle the scaling problems of GPs by summarising a complete data set through a much smaller set of carefully-chosen uncertain pseudo-observations.

Consider a GP, $f \sim \dGP{\meanfunc, \kernel}$, of which $N$ observations $\yobs \in \reals^N$ are made at locations $\xinp \in \inputdomain^N$ through observation model $\Cond{\yobs}{\fobs} = \prod_{n=1}^N \Cond{\yobs_n}{\fobs_n}$, $\fobs_n := \Func{f}{\xinp_n}$.
The seminal work of \citet{titsias2009variational}, revisited by \citet{matthews2016sparse}, introduced the following approximation to the posterior distribution over $f$: \sloppy
\begin{equation}
    \qFunc{f} = \qFunc{\uobs} \Cond{\fnotu}{\uobs}, \label{eqn:qf}
\end{equation}
where $\uobs_m := \Func{f}{\pseudoinp_m}$ are the \emph{pseudo-points} for a collection of $M$ \emph{pseudo-inputs} $\pseudoinp_{1:M}$, and $\fnotu := f \setminus \uobs$ are all of the random variables in $f$ except those used as pseudo-points.
We assume that $\qFunc{\uobs}$ is Gaussian with mean $\meanqu$ and covariance matrix $\covqu$.
Subject to the constraint imposed in \cref{eqn:qf}, this family contains the optimal choice for $\Func{q}{\uobs}$ if each observation model $\Cond{\yobs_n}{\Func{f}{\xinp_n}}$ is Gaussian; moreover, a Gaussian form for $\qFunc{\uobs}$ is the de-facto standard choice when $\Cond{\yobs_n}{\Func{f}{\xinp_n}}$ is not Gaussian -- see e.g. \citet{hensman2013gaussian}.
This choice for $\qFunc{\uobs}$ yields the following approximate posterior predictive distribution at any collection of test points $\xinp_\ast$
\begin{align}
    \qFunc{\fpred} =&\, \pdfNorm{\fpred}{\covmat_{\fpred \uobs} \Lambda_{\uobs} \meanqu, \covmat}, \label{eqn:approx-post-pred} \\
    \covmat :=&\, \covmat_{\fpred} - \covmat_{\fpred \uobs} \Lambda_{\uobs} \covmat_{\uobs \fpred} + \covmat_{\fpred \uobs} \Lambda_{\uobs} \covqu \Lambda_{\uobs} \covmat_{\uobs \fpred}, \nonumber
\end{align}
where $\Lambda_{\uobs} := \inv{\covmat_{\uobs}}$ is the inverse of the covariance matrix between all pseudo-points, $\covmat_{\fpred \uobs}$ is the cross-covariance between the prediction points and pseudo-points under $f$, and $\meanvec_{\uobs}$ and $\meanvec_{\fpred}$ are the mean vectors at the pseudo-points and prediction points respectively.
For observation model
\begin{equation}
    \Cond{\yobs}{\fobs} = \pdfNorm{\yobs}{\fobs, \emissionvar} \label{eqn:obs-model}
\end{equation}
where $\emissionvar \in \reals^{N \times N}$ is a positive-definite diagonal matrix, it is possible to find the optimal $\qFunc{\uobs}$ in closed-form:
\begin{equation}
    \qFunc{\uobs} \propto \pdfNorm{\yobs}{\covmat_{\fobs \uobs} \Lambda_{\uobs} \uobs, \emissionvar} \pdfNorm{\uobs}{\mathbf{0}, \covmat_{\uobs}} \label{eqn:opt-approx-post}
\end{equation}
and the ELBO at this optimum is also closed-form:
\begin{multline}
    \elbo = \log \pdfNorm{\yobs}{\meanvec_\fobs, \covmat_{\fobs \uobs} \Lambda_{\uobs} \covmat_{\uobs \fobs} + \emissionvar}  \\
    - \frac{1}{2} \tr{ \inv{\emissionvar} ( \covmat_{\fobs} - \covmat_{\fobs \uobs} \Lambda_{\uobs} \covmat_{\uobs \fobs} ) },
\end{multline}
and is known as the \emph{saturated bound}.
It can be computed using only $\bigO{NM^2}$ operations using the matrix inversion and determinant lemmas

\paragraph{Related Models} The optimal $\qFunc{\uobs}$ coincides with the exact posterior distribution over $\uobs$ under an \textit{approximate model} with observation density $\pdfNorm{\yobs}{\covmat_{\fobs \uobs} \Lambda_{\uobs} \uobs, \emissionvar}$, and that the first term in $\elbo$ is the log marginal likelihood under this approximate model.
It is well known that this is precisely the approximation employed by \citet{seeger2003fast}, known as the \emph{Deterministic Training Conditional} (DTC). Despite their similarities, the DTC log marginal likelihood and the ELBO typically yield quite different kernel parameters and pseudo-inputs when optimised for -- while the pseudo-inputs $\pseudoinp_{1:M}$ are variational parameters in the variational approximation, and therefore not subject to overfitting (see section 2. of \citet{bui2017unifying}), they are model parameters in the DTC.
For this reason, the variational approximation is widely favoured over the DTC.

However, this close relationship between the variational approximation and the DTC is utilised in \cref{sec:exploiting-separability} to obtain algorithms which combine pseudo-point and state-space approximations in a manner which is both efficient, and easy to implement.

\paragraph{Benefits and Limitations} Pseudo-point approximations perform well when many more observations of a GP are made than are needed to accurately describe its posterior.
This is often the case for regression tasks where the inputs are sampled independently.
In this case the value of $M$ required to maintain an accurate approximation as $N$ increases generally seems not to grow too quickly---indeed \citet{burt2019rates} showed that if the inputs $\xinp_n$ are sampled i.i.d.\ from a Gaussian, then the value of $M$ required scales roughly logarithmically in $N$.
However, \citet{bui2014tree} noted that this is typically not the case for time series problems, where the interval in which the observations live typically grows linearly in $N$.
Indeed \citet{tobar2019band} showed that the number of the pseudo-points per unit time must not drop below a rate analogous to the Nyquist-Shannon rate if an accurate posterior approximation is to be maintained as $N$ grows.
Consequently the number of pseudo-points $M$ required to maintain a good approximation must grow linearly in $N$, so the cost of accurate approximate inference using pseudo-point methods is really $\bigO{N^3}$ in this case.

\section{State Space Approximations to Sum-Separable Spatio-Temporal GPs}
\label{sec:state-space}

Many time-series GPs can be augmented with additional latent dimensions in such a way that the marginal distribution over the original process is unchanged, but with the highly beneficial property that conditioning on all $D$ dimensions at any point in time renders past and future time points independent \citep{sarkka2019applied}.
This augmentation is exact for many GPs, in particular the popular half-integer Mat\'ern family, and a good approximation for others, such as those with exponentiated-quadratic kernels.
Consequently, for any collection of $T$ points in time, $\timevar_1 < \timevar_2 < ... < \timevar_T$, the augmented GP forms a $D$-dimensional Gauss-Markov chain, whose transition dynamics are a function of the kernel of the GP.
This means that standard algorithms (similar to Kalman filtering) can be utilised to perform inference under Gaussian likelihoods, thus achieving linear scaling in $T$.
This technique can be extended to separable and sum-separable spatio-temporal GPs for rectilinear grids of inputs, the details of which are as follows.

\paragraph{Separable GPs}
Let $\faux$ be such an augmentation of $f$ such that the distribution over $\Func{\faux}{\timevar, \spacevar, 1}$ is approximately equal to that of $\Func{f}{\timevar, \spacevar}$, and conditioning on all latent dimensions renders $\faux$ Markov in $\timevar$.
$\faux$ is specified implicitly through a linear stochastic differential equation, meaning that inference under Gaussian observations can be performed efficiently via filtering / smoothing in a Linear-Gaussian State Space Model (LGSSM).
Let $\fobsaux_t$ be the collection of random variables in $\faux$ at inputs given by the Cartesian product between the singleton $\{t\}$, $N_T$ arbitrary locations in space $\spacevar_{1:N_T}$, and all of the latent dimensions $\{1, \ldots, D\}$.
Let the kernel of $f$ be separable: $\Func{\kernel}{(\spacevar, \timevar), (\spaceprime, \timeprime)} = \Func{\kernel^\spacevar}{\spacevar, \spaceprime} \Func{\kernel^\timevar}{\timevar, \timeprime}$.
Any collection of finite dimensional marginals $\fobsaux := \fobsaux_{1:T}$, each using the same $\spacevar_{1:N_T}$, form an LGSSM with $N_T D$-dimensional state with dynamics
\begin{align}
    \fobsaux_{t} \mid \fobsaux_{t-1} \sim&\, \dNorm{ [\Ident_{N_T} \kron \transition_t] \, \fobsaux_{t-1}, \covmat_\fobs^\spacevar \kron \transitionvar_t} \label{eqn:sep-model} \\
    \emission_{ab} :=&\, \Ident_a \kron \begin{bmatrix} 1 & \mathbf{0}_{1 \times b-1} \end{bmatrix} \\
    \fobs_{t} =&\, \emission_{N_T D} \, \fobsaux_{t}, \label{eqn:fnt} \\
    \yobs_{t} \mid \fobs_{t} \sim&\, \dNorm{\fobs_{t}, \emissionvar_{t}} \label{eqn:ynt}
\end{align}
where $\kron$ denotes the Kronecker product, $\transition_t \in \reals^{D \times D}$ and $\transitionvar_t \in \reals^{D \times D}$ are functions of $\kernel^\timevar$, $\transitionvar_t$ is positive definite, $\covmat_\fobs^\spacevar$ is the covariance matrix associated with $\kernel^\spacevar$ and $\spacevar_{1:N_T}$, $\mathbf{0}_{p \times q}$ is a $p \times q$ matrix of zeros, $\yobs_t$ is the block of $\yobs$ containing the observations at the $t^{th}$ time, and the diagonal matrix $\emissionvar_{t}$ is the on-diagonal block of $\emissionvar$ corresponding to $\yobs_{t}$.
See \citet{solin2016stochastic} for further details about $\transition_t$ and $\transitionvar_t$.

\paragraph{Sum-Separable GPs}
Let $f$ be the sum-separable GP given by summing over $f_p \sim \dGP{0, \kernel_p}$.
A state space approximation to $f$ is obtained by constructing a $D_p$-dimensional state space approximation for each $f_p$, the finite dimensional marginals of which form an LGSSM
\begin{align}
    \fobsaux^{p}_{t} \mid \fobsaux^{p}_{t-1} \sim&\, \dNorm{ [\Ident_{N_T} \kron \transition_t^{p}] \, \fobsaux^{p}_{t-1}, [\covmat_{\fobs}^{\spacevar,p} \kron \transitionvar^{p}_t ] } \label{eqn:sum-sep-model} \\
    \fobs_t =&\, \sum_{p=1}^P \emission_{N_T D_p} \fobsaux_t^p
\end{align}
where $\transition_t^p$, $\transitionvar_t^p$, and $\covmat_{\fobs}^{\spacevar, p}$ are defined in the same way as above for each $f_p$, and $\yobs_{t} \mid \fobs_t$ is again given by \cref{eqn:ynt}.
This LGSSM has $N_T \sum_{p=1}^P D_p$ latent dimensions, increasing the time and memory needed to perform inference when compared to a separable model, and is the price of a more flexible model.

\paragraph{Benefits and Limitations}
While this formulation truly scales linearly in $T$ it has two clear limitations,
{\em (i)} all locations of observations must lie on a rectilinear time-space grid if any computational gains are to be achieved; and
{\em (ii)} inference scales cubically in $N_T$, meaning that inference is rendered infeasible by time or memory constraints if a large number of spatial locations are observed.

\section{Exploiting Separability to Obtain the Best of Both Worlds}
\label{sec:exploiting-separability}

We now turn to the main contribution of this work: combining the pseudo-point and state space approximations.
The result is an approximation which is applicable to any sum-separable GP whose time kernels can be approximated by a linear SDE.
We do this simply by constructing a variational pseudo-point approximation of the state space approximation to the original process.
In cases where the state space approximation is exact, this is similar to constructing an inter-domain pseudo-point approximation \citep{lazaro2009inter} to the original process, where some of the pseudo-points are placed in auxiliary dimensions.

In this section we show that by constraining the pseudo-inputs, approximate inference becomes linear in time.

\begin{figure}[!b]
    \begin{tikzpicture}

        \def\mywidth{2.2in}
        \def\myheight{1.1in}

        \node (lhstopcorner) at (0, \myheight) {};
        \node (rhsbottomcorner) at (\mywidth, 0) {};

        \draw[->, line width=1] (0, 0) -- (lhstopcorner);
        \draw[->, line width=1] (0, 0) -- (rhsbottomcorner);

        \draw[-, line width=0.5, opacity=0.5] (\mywidth * 1 / 5, -\myheight * 1 / 24) -- (\mywidth * 1 / 5, \myheight * 11 / 12);
        \node at (\mywidth * 1 / 5, -\mywidth * 2 / 24) [] {\Large $\timevar$};

        \draw[-, line width=0.5, opacity=0.5] (\mywidth * 4 / 5, -\myheight * 1 / 24) -- (\mywidth * 4 / 5, \myheight * 11 / 12);
        \node at (\mywidth * 4 / 5, -\mywidth * 1.6 / 24) [] {\Large $\timeprime$};

        \draw[-, line width=0.5, opacity=0.5] (-\mywidth * 1 / 24, \myheight * 1 / 5) -- (\mywidth * 11 / 12, \myheight * 1 / 5);
        \node at (-\mywidth * 2 / 24, \myheight * 1 / 5) [] {\Large $\spacevar$};

        \draw[-, line width=0.5, opacity=0.5] (-\mywidth * 1 / 24, \myheight * 4 / 5) -- (\mywidth * 11 / 12, \myheight * 4 / 5);
        \node at (-\mywidth * 2 / 24, \myheight * 4 / 5) [] {\Large $\spaceprime$};

        \node at (\mywidth / 5, \myheight / 5) [bluecircle] {};
        \node at (\mywidth * 4 / 5, \myheight * 4 / 5) [redcircle] {};
        \node at (\mywidth * 4 / 5, \myheight * 1 / 5) [blackcircle] {};

        \node (foo) at (\mywidth * 1 / 5, -\mywidth * 2.2 / 10) [bluecircle] {};
        \node (bar) [right=0.2 of foo] {\Huge $\bigCI$};
        \node (baz) [right=0.2 of bar] [redcircle] {};
        \node (bleh) [right=0.2 of baz] {\Huge $\mid$};
        \node (hmm) [right=0.2 of bleh] [blackcircle] {};
        
        
        \node (leftmost) [left=2 of foo] {};

        
        \draw[draw=black!70,thick,dotted, rounded corners] ($(foo.north west)+(-0.3,0.3)$)  rectangle ($(hmm.south east)+(0.3,-0.35)$);

    \end{tikzpicture}
    \vspace*{-5pt}
    \caption{\label{fig:basic-cond-indep}Depiction of the conditional independence property in \cref{eqn:cond-indep-prop}. The blue square is $\Func{f}{\spacevar, \timevar}$, the red square is $\Func{f}{\spacevar^\prime, \timevar^\prime}$, and the black circle is $\Func{f}{\spacevar, \timevar^\prime}$.}
\end{figure}

\subsection{The Conditional Independence Structure of Separable GPs}

\newcommand{\spaceset}{\mathcal{R}}
\newcommand{\timeset}{\mathcal{T}}

\citet{o1998markov} showed that a separable GP $\Func{f}{\spacevar, \timevar}$ has the following conditional independence properties:
\begin{align}
    \Func{f}{\spacevar, \timevar} \bigCI \Func{f}{\spacevar^\prime, \timevar^\prime} \mid \Func{f}{\spacevar, \timevar^\prime}, \label{eqn:cond-indep-prop} \\
    \Func{f}{\spacevar, \timevar} \bigCI \Func{f}{\spacevar^\prime, \timevar^\prime} \mid \Func{f}{\spacevar^\prime, \timevar}.
\end{align}
These are explained graphically in \cref{fig:basic-cond-indep}.
It is straightforward to show (see \cref{sec:cond-indep-collections}) that this property extends to collections of random variables in $f$:
\begin{align}
    &\Func{f}{\spaceset, \timeset} \bigCI \Func{f}{\spaceset^\prime, \timeset^\prime} \mid \Func{f}{\spaceset, \timeset^\prime} \quad  \text{where} \label{eqn:cond-indep-collection} \\
    &\quad \Func{f}{\spaceset, \timeset} := \{ \Func{f}{\spacevar, \timevar} \mid \spacevar \in \mathcal{R}, \timevar \in \mathcal{T} \}  \nonumber \\
    &\quad \Func{f}{\spaceset^\prime, \timeset^\prime} := \{ \Func{f}{\spacevar, \timevar^\prime} \mid \spacevar \in \mathcal{R}^\prime \} \nonumber \\
    &\quad \Func{f}{\spaceset, \timeset^\prime} := \{ \Func{f}{\spacevar, \timevar^\prime} \mid \spacevar \in \mathcal{R} \} \nonumber
\end{align}
where $\mathcal{R}$ and $\mathcal{R}^\prime$ are sets of points in space, $\mathcal{T}$ is a set of points through time, and $\timevar^\prime \in \mathcal{T}$.
This conditional independence property is depicted in \cref{fig:cond-indep-collection}, and it is this second property that sits at the core of the approximation introduced in the next section.

\begin{figure}[!t]
    \centering
    \begin{tikzpicture}

        \def\myheight{1.1in}
        \def\mywidth{4in}

        \centering

        \node (lhstopcorner) at (0, \myheight * 1.1) {};
        \node (rhsbottomcorner) at (0.7 * \mywidth, 0) {};

        \draw[->, line width=1] (0, 0) -- (lhstopcorner);
        \draw[->, line width=1] (0, 0) -- (rhsbottomcorner);


        \foreach \t in {1, 2, 4, 6}
            \node at (\mywidth * \t / 10, \myheight * 2 / 5) [bluecircle] {};

        \foreach \t in {1, 2, 4, 6}
            \node at (\mywidth * \t / 10, \myheight * 4 / 5) [bluecircle] {};

        \node at (\mywidth * 3 / 10, \myheight * 4 / 5) [blackcircle] {};
        \node at (\mywidth * 3 / 10, \myheight * 2 / 5) [blackcircle] {};

        \foreach \r in {1, 3, 5}
            \node at (\mywidth * 3 / 10, \myheight * \r / 5) [redcircle] {};

        \draw[-, line width=0.5, opacity=0.5] (\mywidth * 3 / 10, -\myheight * 1 / 24) -- (\mywidth * 3 / 10, \myheight * 14 / 12);
        \node at (\mywidth * 3 / 10, -\myheight * 2 / 24) [] {\Large $\timeprime$};

        \node (foo) at (\mywidth * 0.35 / 5, -\myheight * 4 / 10) [bluecircle] {};
        \node (foo1) at (0.2in + \mywidth * 0.35 / 5, -\myheight * 4 / 10) [bluecircle] {};
        \node (foo2) at (0.4in + \mywidth * 0.35 / 5, -\myheight * 4 / 10) [bluecircle] {};
        \node (bar) [right=0.2 of foo2] {\Huge $\bigCI$};
        \node (baz) [right=0.2 of bar] [redcircle] {};
        \node (baz1) [right=0.65 of bar] [redcircle] {};
        \node (baz2) [right=1.1 of bar] [redcircle] {};
        \node (bleh) [right=0.2 of baz2] {\Huge $\mid$};
        \node (hmm) [right=0.2 of bleh] [blackcircle] {};
        \node (hmm1) [right=0.6 of bleh] [blackcircle] {};

        \node (leftmost) [left=0.1 of foo] {};
        \node (rightmost) [right=0.1 of hmm] {};

        \draw (\mywidth * 1 / 5, \myheight * 4 / 5) node {};

        \node (timevar) [below=0.1 of rhsbottomcorner] {\Large $\timevar$};
        \node (spacevar) [left=0.1 of lhstopcorner] {\Large $\spacevar$};

        \draw[draw=black!70,thick,dotted, rounded corners] ($(foo.north west)+(-0.3,0.3)$)  rectangle ($(hmm1.south east)+(0.3,-0.35)$);

    \end{tikzpicture}

    \vspace*{-5pt}

    \caption{\label{fig:cond-indep-collection}Depiction of the conditional independence property in \cref{eqn:cond-indep-collection}. The blue squares are $\Func{f}{\spaceset, \timeset}$, the red squares are $\Func{f}{\spaceset^\prime, \timeset^\prime}$, and the black circles are $\Func{f}{\spaceset, \timeset^\prime}$.}
\end{figure}

\subsection{Combining the Approximations}

We now combine the pseudo-point and state space approximations, and show how a temporal conditional independence property means that the optimal approximate posterior is Markov.
This in turn leads to a closed-form expression for the optimum under Gaussian observation models and the existence of a simplified LGSSM in which exact inference yields optimal approximate inference in the original model.

\paragraph{Pseudo-Point Approximation of State Space Augmentation}
We perform approximate inference in a separable GP $f$ with the kernel in \cref{eqn:separable-kernel} by applying the standard variational pseudo-point approximation (\cref{sec:vfe}) to its state space augmentation (\cref{sec:state-space}) $\faux$:
\begin{equation}
    \qFunc{\faux} := \qFunc{\uobsaux} \Cond{\faux_{\neq \uobsaux}}{\uobsaux}, \quad \qFunc{\uobsaux} = \pdfNorm{\uobsaux}{\meanvecq_{\uobsaux}, \covmatq_{\uobsaux}}, \nonumber
\end{equation}
where the pseudo-points $\uobsaux = \uobsaux_{1:T}$ form a rectilinear grid of points in time, space, and \emph{all} of the latent dimensions with the same structure as $\fobsaux$ in \cref{sec:state-space}, but replacing $\spacevar_{1:N_T}$ with a collection of $\Mpertime$ spatial pseudo-inputs, $\zvec_{1:\Mpertime}$, for a total of $T \Mpertime D$ pseudo-points. $\Prob{\uobsaux}$ is therefore Markov-through-time with conditional distributions
\begin{align}
    \uobsaux_t \mid \uobsaux_{t-1} \sim&\, \dNorm{[\Ident_{\Mpertime} \kron \transition_t] \uobsaux_{t-1}, \covmat_{\uobs}^\spacevar \kron \transitionvar_t}, \\
    \uobs_t :=&\, \emission_{\Mpertime D} \uobsaux_t.
\end{align}
where $\covmat_{\uobs}^\spacevar$ is the covariance matrix associated with $\kernel^\spacevar$ and $\zvec_{1:\Mpertime}$.
Note the resemblance to \cref{eqn:sep-model}.
No constraint is placed on the location of the pseudo-points in space, only that they must remain at the same place for all time points.

Crucially, we now relax the assumption that the inputs associated with $\fobs$ must form a rectilinear grid.
Instead, it is necessary only to require that each observation is made at one of the $T$ times at which we have placed pseudo-points.
We denote the number of observations at time $t$ by $N_t$, and continue to denote by $\fobs_t$ the set of observations at time $t$.

\paragraph{Exploiting Conditional Independence} Due to \citet{o1998markov}'s conditional independence property, $\Cond{\fobsaux_t}{\uobsaux} = \Cond{\fobsaux_t}{\uobs_t}$; see \cref{sec:cond-indep-results} for details. Consequently, the reconstruction terms in the ELBO depend only on $\uobs_t$ as opposed to the entirety of $\uobsaux$:
\begin{align}
    \elbo =&\, \sum_{t=1}^T r_t - \DivKL{\qFunc{\uobsaux}}{\Prob{\uobsaux}} \label{eqn:state-space-elbo}, \\
     r_t :=&\, \Expect{\qFuncSpaced{\uobs_t}}{\Expect{\CondFunc{p\,}{\fobs_{t}}{\uobs_t}}{\log \Cond{\yobs_{t}}{\fobs_{t}}}} \nonumber
\end{align}
This property alone yields substantial computational savings -- only the covariance between $\uobs_t$ and $\fobs_t$ need be computed, as opposed to all of $\uobsaux$ and $\fobs_t$.
Moreover, this means that
\begin{align}
    \covmat_{\fobs \uobsaux} \Lambda_{\uobsaux} =&\, \begin{bsmallmatrix}
        \Bmat_1 & & \mathbf{0} \\
        & \ddots & \\
        \mathbf{0} & & \Bmat_T
    \end{bsmallmatrix}, \,\, \Bmat_t := \covmat_{\fobs_t \uobs_t} \Lambda_{\uobs_t} \emission_{\Mpertime D}. \label{eqn:block-diag-cond}
\end{align}

\paragraph{The Optimal Approximate Posterior is Markov}

As an immediate consequence of \cref{eqn:state-space-elbo}, and by the same argument as that made by \citet{seeger1999bayesian}, highlighted by \citet{opper2009variational}, the optimal approximate posterior precision satisfies
\begin{align}
    \precq_{\uobsaux} = \Lambda_{\uobsaux} + \begin{bsmallmatrix}
    \prect_1 & & \mathbf{0} \\
    & \ddots & \\
    \mathbf{0} & & \prect_T
    \end{bsmallmatrix}, \prect_t := -2 \nabla_{\covmatq_{t}} r_t. \label{eqn:optimal-approx-post-precision}
\end{align}
where $\precq_{\uobsaux} := [\covmatq_{\uobsaux}]^{-1}$, and $\covmatq_{t}$ is the $t^{th}$ block on the diagonal of $\covmatq_{\uobsaux}$.
Recall that the precision matrix of a Gauss-Markov model is block tridiagonal (see e.g. \citet{grigorievskiy2017parallelizable}), so $\Lambda_{\uobsaux}$ is block tridiagonal.
Further, the exact posterior precision of an LGSSM with a Gaussian observation model is given by the sum of this block tridiagonal precision matrix and a block-diagonal matrix with the same block size.
$\precq_{\uobsaux}$ has precisely this form, so the optimal approximate posterior over $\uobsaux$ must be a Gauss-Markov chain.

\paragraph{Approximate Inference via Exact Inference in an Approximate Model}
The above is equivalent to the optimal approximate posterior having density proportional to
\begin{align}
    \qFunc{\uobsaux} \propto \prod_{t=1}^T \Cond{\uobsaux_{t}}{\uobsaux_{t-1}} \pdfNorm{\approxobs_t}{\uobsaux_t, \prect_t^{-1}},
\end{align}
where $\approxobs_1, ..., \approxobs_T$ are a collection of $T$ surrogate observations, detailed in \cref{sec:optimal-approx-posterior-structure}.
Thus the optimal $\qFunc{\uobsaux}$ is given by exact inference in an LGSSM.
Moreover, \citet{ashman2020sparse} (App.~A) show that $\prect_t$ can be written as a sum of $N_t$ rank-1 matrices.

\paragraph{Solution for Gaussian Observation Models}
Under a Gaussian observation model, the optimal approximate posterior is given by the exact posterior under the DTC observation model, as discussed in section \cref{sec:vfe}.
\cref{eqn:block-diag-cond} means that the DTC observation model can be written as
\begin{equation}
    \pdfNorm{\yobs}{\covmat_{\fobs \uobsaux} \Lambda_{\uobsaux} \uobsaux, \emissionvar} = \prod_{t=1}^T \pdfNorm{\yobs_t}{\Bmat_t \uobsaux_t, \emissionvar_t}. \label{eqn:lgssm-separable}
\end{equation}
In conjunction with $\Prob{\uobsaux}$, this yields the required LGSSM.

This LGSSM can be exploited both to perform approximate inference and compute the saturated bound in linear time, repurposing existing code -- see \cref{sec:computing-stuff}.
This LGSSM also makes it clear, for example, how to employ the parallelised inference procedures proposed by \citet{sarkka2020temporal} and \citet{loper2020general} within this approximation.

\paragraph{Sum-Separable Models}
Extending this approximation to sum-separable processes is similar to the standard state space approximation.
The resulting LGSSM is
\begin{align}
    \uobsaux_{t}^p \mid \uobsaux_{t-1}^p \sim&\, \dNorm{[ \Ident_{\Mpertime} \kron \Amat_t^p ] \, \uobsaux_{t-1}^p, [ \covmat_{\uobs}^{\spacevar, p} \kron \Qmat_t^p] } \label{eqn:lgssm-sum-separable} \\
    \Cond{\yobs_t}{\uobsaux_t} =&\, \pdfNorm{\yobs_t}{{\textstyle \smash{\sum_{p=1}^P}} \Bmat_t^p \uobsaux_t^p, \emissionvar_t}. \nonumber \\
    \Bmat_t^p :=&\, \covmat_{\fobs_t^p \uobs_t^p} \Lambda_{\uobs_t^p} \emission_{\Mpertime D_p}. \nonumber
\end{align}
Note the resemblance to \cref{eqn:sum-sep-model}.

\paragraph{Efficient Inference in the Conditionals} The structure present in each $\Bmat_t^p$ can be used to accelerate inference.
In particular note that $\emission_{\Mpertime D_p}$ has size $\Mpertime \times D\Mpertime$ while $\covmat_{\fobs_{t} \uobs_{t}}^p \Lambda_{\uobs_t}^p$ is $N_t \times \Mpertime$.
Certainly $\Mpertime \leq D\Mpertime$ and typically $\Mpertime < N$, so this linear transformation forms a bottleneck.
\cref{sec:efficient-inference-in-conditional} explores this property, and shows how to exploit it to accelerate inference.

\paragraph{Computational Complexity} The total number of flops required to compute the saturated ELBO is $T (D \Mpertime)^3 + D^3 \Mpertime^2 + \Mpertime^2 \sum_{t=1}^T N_t$ to leading order.
This is a great deal fewer when $T$ is large than the $M^3 + M^2N = \Mpertime^3 T^3 + \Mpertime^2 T^2 N $ required if the bound is computed naively.
Similar improvements are achieved when making posterior predictions.

\paragraph{Utilising Other Pseudo-Point Approximations} The conditional independence property exploited to develop the variational approximation in this section also shines new light on the work of \citet{hartikainen2011sparse}.
In the specific case of their equation 5, in which the observation model is (adopting their notation) $\Cond{\yobs_k}{\vec{x}_k} = \pdfNorm{\yobs_k}{[ \Ident_N \kron \emission ] \vec{x}_k, \emissionvar_t}$, they perform approximate inference in $\Prob{\uobsaux}$ using the modified observation model
\begin{align}
    \CondProbApprox{\yobs_t}{\uobsaux_t} :=&\, \pdfNorm{\yobs_t}{\covmat_{\fobs_t \uobsaux_t} \Lambda_{\uobsaux_t} \uobsaux_t, [\covyapprox]_t}, \nonumber \\
    [\covyapprox]_t :=&\, \Func{\text{diag}}{\covmat_{\fobs_t} - \covmat_{\fobs_t \uobsaux_t} \Lambda_{\uobsaux_t} \covmat_{\uobsaux_t \fobs_t}} + \emissionvar_t \nonumber
\end{align}
which is inspired by the well-known FITC \citep{csato2002sparse, snelson2005sparse} approximation.
However, due to \citet{o1998markov}'s conditional independence property, this is equivalent to
\begin{align}
    \CondProbApprox{\yobs}{\uobsaux} :=&\, \pdfNorm{\yobs}{\covmat_{\fobs \uobsaux} \Lambda_{\uobsaux} \uobsaux, \covyapprox}, \nonumber \\
    \covyapprox :=&\, \Func{\text{diag}}{\covmat_{\fobs} - \covmat_{\fobs \uobsaux} \Lambda_{\uobsaux} \covmat_{\uobsaux \fobs}} + \emissionvar. \nonumber
\end{align}
While \citet{hartikainen2011sparse} did not actually consider the Gaussian observation model in their work, it is clear from the above that they would have utilised \emph{exactly} the FITC approximation applied to $\faux$ had they done so.

\citet{bui2017unifying} showed that both FITC and VFE can be viewed as edge cases of the Power EP algorithm introduced by \citet{minka2004power}. Consequently the equivalent approximate model generalised both that of FITC and VFE -- only $\covyapprox$ is changed from FITC: let $\alpha \in [0, 1]$, then
\begin{equation}
    \covyapprox := \alpha \, \Func{\text{diag}}{\covmat_{\fobs} - \covmat_{\fobs \uobsaux} \Lambda_{\uobsaux} \covmat_{\uobsaux \fobs}} + \emissionvar. \nonumber
\end{equation}

In short, most standard pseudo-point approximations can be straightforwardly combined with state space approximations for sum-separable spatio-temporal GPs in the manner that we propose due to the conditional independence property.

\begin{figure}[!t]
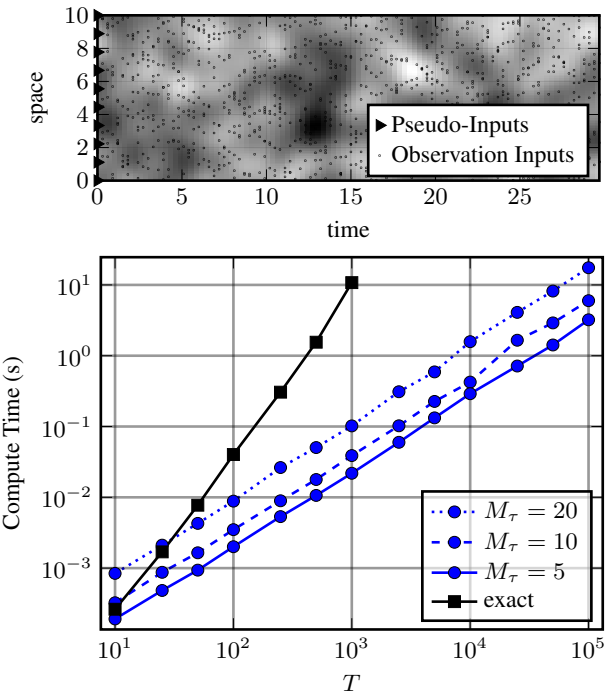

    \small
    \raggedleft
    \pgfplotsset{axis on top, scale only axis}
  
    \setlength{\figurewidth}{.8\columnwidth}
    \setlength{\figureheight}{.333\figurewidth}    
    \includegraphics{figures/post_irregular.tikz}

    \vspace*{-5pt}

    \setlength{\figurewidth}{.8\columnwidth}
    \setlength{\figureheight}{.75\figurewidth}            
    \includegraphics{figures/irregular_timing_plot.tikz}
    \vspace*{-20pt}
    \caption{\label{fig:irregular-plot}Arbitrary Spatial Locations. Top: Locations of (pseudo-)inputs for $\Mpertime=10$. $10$ locations in space chosen randomly at each time point. Bottom: Time to compute ELBO vs performing exact inference. ELBO tight for $\Mpertime=20$; see \cref{fig:lml-plot}.}
\end{figure}

\begin{figure}[!t]
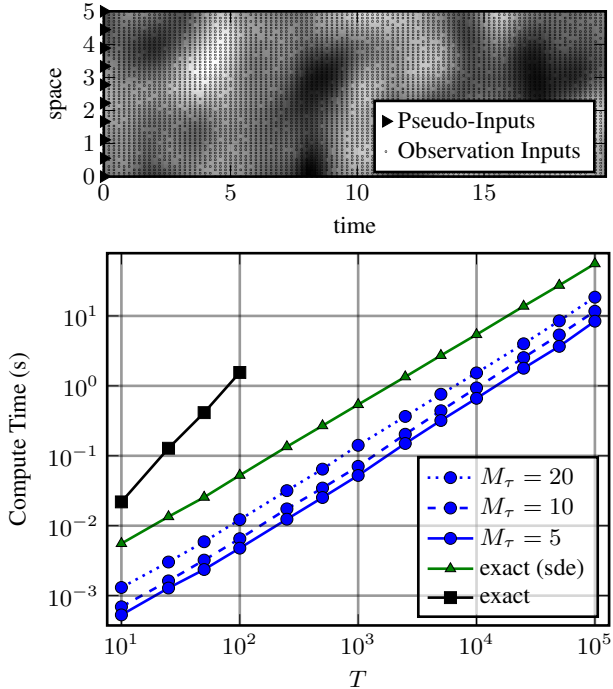

    \small
    \raggedleft
    \pgfplotsset{axis on top, scale only axis}
  
    \setlength{\figurewidth}{.8\columnwidth}
    \setlength{\figureheight}{.333\figurewidth}    
    \includegraphics{figures/post_rectlinear.tikz}

    \vspace*{-5pt}

    \setlength{\figurewidth}{.8\columnwidth}
    \setlength{\figureheight}{.75\figurewidth}
    \includegraphics{figures/regular_timing_plot.tikz}
    \vspace*{-20pt}
    \caption{\label{fig:rectilinear-plot}Grid-with-Missings. Top: Locations of \mbox{(pseudo-)inputs} -- note the grid structure with $50$ observations per time point, of which $5$ are missing. Bottom: Time to compute ELBO vs LML naively and via state space methods (\emph{sde}). ELBO tight for $\Mpertime=20$; see \cref{fig:lml-plot}.}
    \vspace*{-1em}
\end{figure}

\paragraph{Relationship with Other Approximation Techniques}
There are several existing methods that could be used to scale GPs to large spatio-temporal problems beyond those already considered -- each method makes different assumptions about the kinds of problems considered, therefore making different trade-offs relative to ours.

The popular Kronecker-product methods for separable kernels explored by \citet{saatcci2012scalable} are unable to handle heteroscedastic observation noise or missing data, scale cubically in time, and require observations to lie on a rectilinear grid.
Our approach suffers none of these limitations.

\citet{wilson2015kernel} introduced a pseudo-point approximation they call \textit{Structured Kernel Interpolation} (SKI) which is closely-related to the Kronecker-product methods, but removes many of their constraints.
In particular, SKI places pseudo-points on a grid across all input dimensions, and utilises them to construct a sparse approximation to the prior covariance matrix over the data -- crucially it is local in the sense that the approximation to the covariance between the pseudo-points and any given point depends only on a handful of pseudo-points.
SKI covers the domain in a regular grid of points, which results in exponential growth in the number of pseudo-points as the number of dimensions grows.
So, while this approximation scales very well in low-dimensional settings, it does not scale to input domains comprising more than a few dimensions.
Moreover, to exploit this grid structure, separability across all dimensions is required.
\citet{gardner2018product} alleviates this exponential scaling problem, but still require that the kernel be separable across all dimensions if their approximation is to be applied.
Our approach does not suffer from this constraint as only the time dimension must be covered by pseudo-points -- there are no constraints on their spatial locations.
Naturally, that we do not perform similar approximations to SKI across the spatial dimensions means that our method will have the standard set of limitations experienced by all pseudo-point methods as the number of points in space grows.
In short, the two classes of method are applicable to different kinds of spatio-temporal problems.
They take somewhat orthogonal approaches to approximate inference, so combining them by utilising SKI across the spatial dimensions could offer the benefits of both classes of approximation in situations where SKI is applicable to the spatial component.

Similarly, approximations based on the relationship between GPs and Stochastic Partial Differential Equations \citep{whittle1963stochastic, lindgren2011explicit} could be combined with this work to improve scaling in space when the spatial kernel is in the Mat\'{e}rn family. In low-dimensional settings other standard inter-domain pseudo-point approximations such as those of \citet{hensman2017variational}, \citet{burt2020variational}, and \cite{dutordoir2020sparse} could be applied.

\section{Experiments}
\label{sec:experiments}

We view the proposed approximation to be a useful contribution if it is able to outperform the vanilla state space approximation (\cref{sec:state-space}), which is a strong baseline for the tasks we consider. To that end, we benchmark inference against synthetic data in \cref{sec:benchmarking}, on a large-scale temperature modeling task to which both the vanilla and pseudo-point state space approximations can feasibly be applied (\cref{sec:ghcn}), and finally to a problem to which it is completely infeasible to apply the vanilla state space approximation (\cref{sec:housing}).
We do not compare directly against the vanilla pseudo-point approximations of \citet{titsias2009variational} and \citet{hensman2013gaussian}.
As noted in \cref{sec:vfe}, they are asymptotically no better than exact inference for problems with long time horizons.

\subsection{Benchmarking}
\label{sec:benchmarking}

We first conduct two simple proof-of-concept experiments on synthetic data with a separable GP to verify our proposed method.
In both experiments we consider quite a large temporal extent, but only moderate spatial, since we expect the proposed method to perform well in such situations -- if the spatial extent of a data set is very large relative to the characteristic spatial variation, pseudo-point methods will struggle and, by extension, so will our method. \cref{sec:additional-experimental-details} contains additional details on the setup used, and \cref{sec:sum-separable} contains the same experiments for a sum-separable model.

\paragraph{Arbitrary Spatial Locations} \cref{fig:irregular-plot} (top) shows how inputs were arranged for this experiment; at each time $10$ spatial locations were sampled uniformly between $0$ and $10$, so $N = 10T$. The spatial location of pseudo-inputs are regular between $0$ and $10$. When using pseudo-points, we are indeed able to achieve substantial performance improvements relative to exact inference by utilising the state space methodology, while retaining a tight bound.

\paragraph{Grid-with-Missings} \cref{fig:rectilinear-plot} (top) shows how (pseudo) inputs were arranged for this experiment for $\Mpertime=10$; the same $50$ spatial locations are considered at each time point, but $5$ of the observations are dropped at random, for a total of $N_t = 45$ observations per time point -- our largest case therefore involves $N=4.5 \times 10^6$ observations.  The timing results show that we are able to compute a good approximation to the LML using roughly a third of the computation required by the standard state space approach to inference.

\subsection{Climatology Data}
\label{sec:ghcn}

\begin{figure}[!t]
    \centering
    \includegraphics[width=\columnwidth,trim=12 35 90 35,clip]{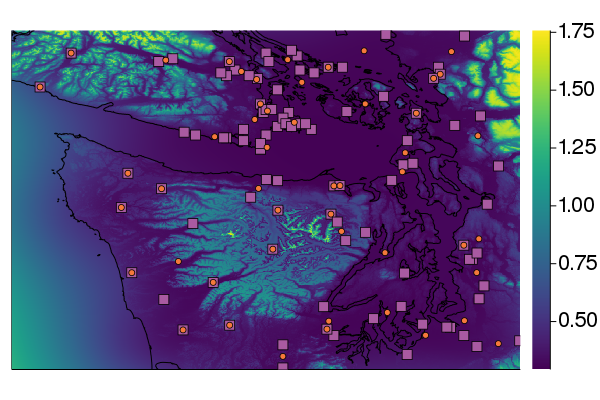}
    \caption{\label{fig:ghcn-posterior}Posterior std.\ dev.\ counterpart to  \cref{fig:posterior-mean-weather-station}. The colour scale ($0$~\protect\includegraphics[width=1cm,height=2.5mm]{./figures/viridis}~$1.75$) is relative, pink squares are weather stations, and orange dots pseudo-points.}
\end{figure}

\begin{figure*}[!t]
    \centering
    \begin{subfigure}{\columnwidth}
    \includegraphics[width=\columnwidth]{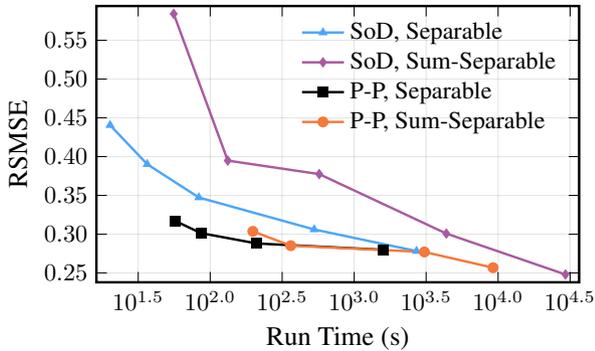}
    \end{subfigure}
    \hfill
    \begin{subfigure}{\columnwidth}
    \includegraphics[width=\columnwidth]{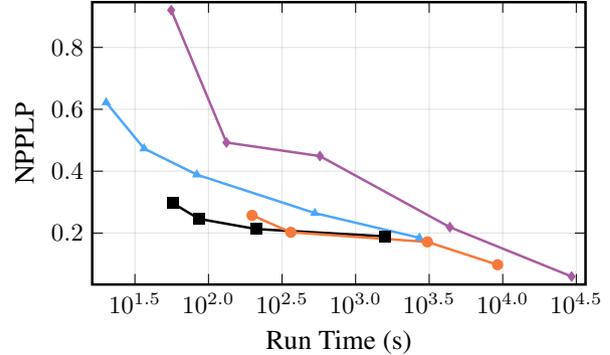}
    \end{subfigure}
    \vspace*{-10pt}
    \caption{Test Root Standardised Mean-Squared Error (RSMSE) and Negative Posterior Predictive Log Probability (NPPLP). Marked points on Pseudo-Point curves used $M \in \{ 5, 10, 20, 50 \}$ moving from left to right -- similarly for SoD markers, with the addition of $M=99$, corresponding to learning with the exact LML. Larger $M$ improves performance, but time taken to train is increased. Sum-Separable models take longer to train than Separable but can produce better results.}
    \label{fig:ghcn-speed-acc}
\end{figure*}

The Global Historical Climatology Network (GHCN) \citep{menne2012overview} comprises daily measurements of a variety of meteorological quantities, going back more than 100 years.
We combine this data with the NASA Digital Elevation Model \citep{nasadem} to model the daily maximum temperature in the region $(\ang{47}, -\ang{127})$ and $(\ang{49}, -\ang{122})$, which contains $99$ weather stations.
We utilise all data in this region since the year $2000$, training on $90\%$ ($331522$) and testing on $10\%$ ($36835$) of the data. This experiment was conducted on a workstation with a 3.60~GHz Intel i7-7820X CPU (8~cores), and 46~GB of 3000~MHz DDR3 RAM.

Two models were utilised: a simple separable model with a Mat\'{e}rn-$\frac{5}{2}$ kernel over time, and Exponentiated Quadratic over space, and a sum of two such kernels with differing length scales and variances. Additional details in \cref{sec:climatology-data-extras}.

\cref{fig:ghcn-speed-acc} compares a simple subset-of-data (SoD) approximation, which is exact when $M=99$, with the pseudo-point (P-P) approximation developed in this work.
The results demonstrate that %
{\em (i)}~the pseudo-point approximation has a more favourable speed-accuracy trade-off than the SoD, offering near exact inference in less time for a separable kernel, and
{\em (ii)}~a sum-separable model offers substantially improved results over a separable in this scenario.

\begin{figure}[!t]
    \centering
    \begin{subfigure}{.49\columnwidth}
    \begin{tikzpicture}[outer sep=0]
      \node at (0,0) {\includegraphics[width=\textwidth, trim=12 12 90 10,clip]{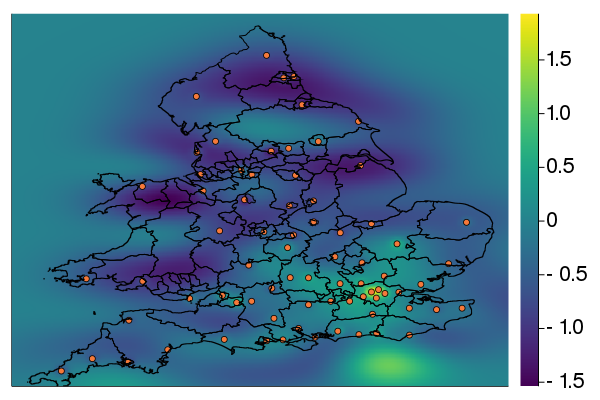}};
      \node[draw=black!70,inner sep=0.5pt] at (0,-1.8) {\includegraphics[width=2cm,height=2.5mm]{./figures/viridis}};
      \node at (-1.4,-1.8) {\scriptsize $-1.5$};
      \node at ( 1.4,-1.8) {\scriptsize $2.0$};      
    \end{tikzpicture}
    \caption{Mean}
    \end{subfigure}
    \hfill
    \begin{subfigure}{.49\columnwidth}    
    \begin{tikzpicture}[outer sep=0]
      \node at (0,0) {\includegraphics[width=\textwidth, trim=12 12 90 10,clip]{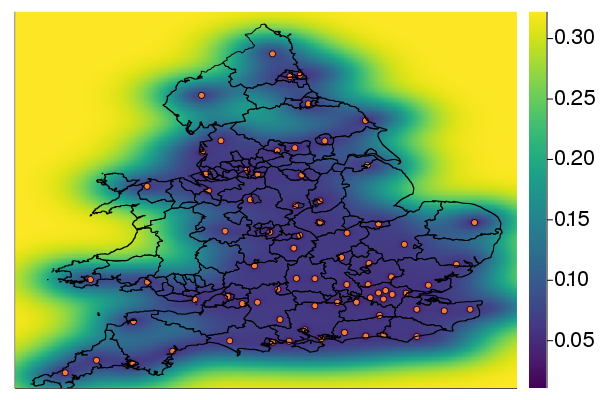}};
      \node[draw=black!70,inner sep=0.5pt] at (0,-1.8) {\includegraphics[width=2cm,height=2.5mm]{./figures/viridis}};
      \node at (-1.4,-1.8) {\scriptsize $0.0$};
      \node at ( 1.4,-1.8) {\scriptsize $0.5$};      
    \end{tikzpicture}
    \caption{Std.\ dev.}    
    \end{subfigure}
    \vspace*{-1em}
    \caption{\label{fig:housing-posterior}Apartment price posterior mean and standard deviation on a day near the end of 2020. Pseudo-point locations picked using K-means and  marked with orange dots.}
\end{figure}

\subsection{Apartment Price Data}
\label{sec:housing}

Property sales data by postcode across England and Wales are provided by \citet{housing-price-data}. There are over $10^6$ unique postcodes in England and Wales, of which a tiny proportion contain a sale on a given day. Consequently this data set has essentially arbitrary spatial locations at each point in time, which our approximation can handle, but which renders the vanilla state-space method infeasible.

We follow a similar procedure to \citet{hensman2013gaussian}, cross-referencing postcodes against a separate database \citep{camdendatabase} to obtain latitude-longitude coordinates, which we regress against the standardised logarithm of the price.
However, we train on $843766$ of the $1687536$ apartment sales between $2010$ and $2020$, and test on the remainder.
We again consider a separable and sum-separable GP that are similar to those in \cref{sec:ghcn}, but the temporal kernel is Mat\'{e}rn-$\frac{3}{2}$.
More detail in \cref{sec:apartment-data-extras}.

\cref{tab:housing-numbers} again demonstrates that a sum-separable model is able to capture more useful structure in the data than the separable model; \cref{fig:housing-posterior} shows the variability and uncertainty in the prices on an arbitrarily chosen day.

\begin{table}[!t]
    \centering
    \caption{Performance on apartment price data. $\Mpertime=75$.}
    \label{tab:housing-numbers}
    \begin{tabular}{r|cc}
\toprule
                     & RSMSE & NPPLP \\ \hline
    Separable & 0.658 &  2920 \\
Sum-Separable & 0.618 &   192 \\
\bottomrule
\end{tabular}
\end{table}

\section{Discussion}

This work shows that pseudo-point and state space approximations can be directly combined in the same model to effectively perform approximate inference and learning in sum-separable GPs, and ties up loose ends in the theory related to combining these models. This is important in spatio-temporal applications, where the model admits a form of an arbitrary-dimensional (spatial) random field with dynamics over a long temporal horizon.
Experiments on synthetic and real-world data show that this approach enables a favourable trade-off between computational complexity and accuracy.

Standard approximations for non-Gaussian observation models, such as those discussed by \citet{wilkinson2020state}, \citet{chang2020fast}, and \citet{ashman2020sparse}, can be applied straightforwardly within our approximation.
Our method represents the simplest point in a range of possible approximations. As such there are several promising paths forward to achieve further scalability beyond simply utilising hardware acceleration, including %
{\em (i)}~applying the estimator developed by \citet{hensman2013gaussian} to our method to utilise mini batches of data,
{\em (ii)}~embedding the infinite-horizon approximation introduced by \citet{solin2018infinite} to trade off some accuracy for a substantial reduction in the computational complexity of our approximation,
{\em (iii)}~removing the constraint that observations must appear at the same time as pseudo-points by utilising the method developed by \citet{adam2020doubly}.

\paragraph{Code}

\url{github.com/JuliaGaussianProcesses/TemporalGPs.jl} contains an implementation of the approximation developed in this work.

\url{github.com/willtebbutt/PseudoPointStateSpace-UAI-2021} contains code built on top of TemporalGPs.jl to reproduce the experiments.

\begin{contributions}
    WT conceived the idea, implemented models, and ran the experiments. All authors helped develop the idea, write the paper, and devise experiments.
\end{contributions}

\begin{acknowledgements}
We thank Adri\a`{a} Garriga-Alonso, Wessel Bruinsma, Matt Ashman, and anonymous reviewers for invaluable feedback.
Will Tebbutt is supported by Deepmind and Invenia Labs.
Arno Solin acknowledges funding from the Academy of Finland (grant id 324345). Richard E.~Turner is supported by Google, Amazon, ARM, Improbable, Microsoft Research and EPSRC grants EP/M0269571 and EP/L000776/1.

\end{acknowledgements}

\newpage
\bibliography{bio.bib}

\onecolumn

\def\toptitlebar{\hrule height1pt \vskip .25in} 
\def\bottomtitlebar{\vskip .22in \hrule height1pt \vskip .3in} 

\newcommand{\nipstitle}[1]{%
    \phantomsection
    \vskip 0.1in%
    \toptitlebar%
    \begin{minipage}{\textwidth}%
        \centering{\Large\bf #1\par}%
    \end{minipage}%
    \bottomtitlebar%
}

\clearpage
\normalsize

\nipstitle{
    {Supplementary Material:} \\
    Combining Pseudo-Point and State Space Approximations \\
    for Sum-Separable Gaussian Processes
}

\pagestyle{empty}

\appendix

\newcommand{\inputdomainaux}{\bar{\inputdomain}}

\newcommand{\xaux}{\bar{x}}

\newcommand{\kerneltd}{\kernel_{\timevar \dimvar}}
\newcommand{\kernelspace}{\kernel_{\spacevar}}

\newcommand{\transitionoperator}{\mathcal{A}}

\section{Conditional Independence Results}
\label{sec:cond-indep-results}

\subsection{The Conditional Independence Structure of Collections of Points}
\label{sec:cond-indep-collections}

The following lemma establishes an analogue of the conditional independence result introduced by \citet{o1998markov}, which applies to individual points, to collections of points in a separable Gaussian process.

\begin{lemma}\label{lma:cond-indep-extension}
    Let $\domainX$ and $\domainY$ be sets, $f \sim \dGP{0, \kernel}$ where $\Func{\kernel}{(x, y), (x^\prime, y^\prime)} := \Func{\kernelx}{x, x^\prime} \Func{\kernely}{y, y^\prime}$, $x, x^\prime \in \domainX$ and $y, y^\prime \in \domainY$, and $\kernelx$ and $\kernely$ are non-degenerate, meaning covariance matrices constructed using them are invertible.
    Then for finite sets $\domainX_1, \domainX_2 \subset \domainX$, $\domainY_1, \domainY_2 \subset \domainY$, and sets of random variables
    \begin{align}
        &\setA = \{f(x, y) \mid x \in \domainX_1, y \in \domainY_1\}, \nonumber \\
        &\setB = \{f(x, y) \mid x \in \domainX_2, y \in \domainY_2\}, \nonumber \\
        &\setC = \{f(x, y) \mid x \in \domainX_2, y \in \domainY_1\}, \nonumber
    \end{align}
    it is the case that
    \begin{equation}
        \setA \bigCI \setB \mid \setC. \\
    \end{equation}
\end{lemma}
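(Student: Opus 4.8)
The plan is to reduce the claim to a single matrix identity. Since $\setA$, $\setB$ and $\setC$ are finite collections of random variables drawn from one Gaussian process, the stacked vector $(\setA,\setB,\setC)$ is jointly Gaussian, so $\setA \bigCI \setB \mid \setC$ holds exactly when the conditional cross-covariance vanishes:
\[
\mathrm{Cov}(\setA,\setB\mid\setC) \;=\; \mathrm{Cov}(\setA,\setB) - \mathrm{Cov}(\setA,\setC)\,\inv{\mathrm{Cov}(\setC,\setC)}\,\mathrm{Cov}(\setC,\setB) \;=\; \mathbf{0}.
\]
Only $\mathrm{Cov}(\setC,\setC)$ needs to be invertible for this expression to be defined in the usual way, and jointly Gaussian vectors that are conditionally uncorrelated given $\setC$ are conditionally independent given $\setC$; so it suffices to establish the displayed identity.

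Separability is what makes this work, because it turns each of these (cross-)covariance matrices into a Kronecker product. Fix an ordering of the points within each collection (say, order $\Func{f}{x,y}$ by $x$ first and then $y$), and let $K^{\textrm{x}}_{ij}$ denote the Gram matrix of $\kernelx$ with rows indexed by $\domainX_i$ and columns by $\domainX_j$, and $K^{\textrm{y}}_{ij}$ the analogue for $\kernely$. Then $\Func{\kernel}{(x,y),(x',y')} = \Func{\kernelx}{x,x'}\Func{\kernely}{y,y'}$ gives $\mathrm{Cov}(\setA,\setB) = K^{\textrm{x}}_{12}\kron K^{\textrm{y}}_{12}$, $\mathrm{Cov}(\setA,\setC) = K^{\textrm{x}}_{12}\kron K^{\textrm{y}}_{11}$, $\mathrm{Cov}(\setC,\setC) = K^{\textrm{x}}_{22}\kron K^{\textrm{y}}_{11}$ and $\mathrm{Cov}(\setC,\setB) = K^{\textrm{x}}_{22}\kron K^{\textrm{y}}_{12}$. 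Non-degeneracy of $\kernelx$ and $\kernely$ makes $K^{\textrm{x}}_{22}$ and $K^{\textrm{y}}_{11}$ — hence $\mathrm{Cov}(\setC,\setC)$ — invertible, and the rest is Kronecker bookkeeping: with $\inv{(P\kron Q)} = \inv{P}\kron\inv{Q}$ and $(P\kron Q)(R\kron S) = (PR)\kron(QS)$,
\[
\mathrm{Cov}(\setA,\setC)\,\inv{\mathrm{Cov}(\setC,\setC)}\,\mathrm{Cov}(\setC,\setB)
= \bigl(K^{\textrm{x}}_{12}\,\inv{(K^{\textrm{x}}_{22})}\,K^{\textrm{x}}_{22}\bigr)\kron\bigl(K^{\textrm{y}}_{11}\,\inv{(K^{\textrm{y}}_{11})}\,K^{\textrm{y}}_{12}\bigr)
= K^{\textrm{x}}_{12}\kron K^{\textrm{y}}_{12}
= \mathrm{Cov}(\setA,\setB),
\]
so $\mathrm{Cov}(\setA,\setB\mid\setC)=\mathbf{0}$ and the lemma follows. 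Specialising all four index sets to singletons recovers O'Hagan's statements in \cref{eqn:cond-indep-prop}.

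The one point that needs care is overlap between the index sets: if $\domainX_1\cap\domainX_2\neq\emptyset$ or $\domainY_1\cap\domainY_2\neq\emptyset$, the joint covariance of $(\setA,\setB,\setC)$ can be singular and a single variable $\Func{f}{x,y}$ can belong to more than one of the three collections. This is harmless, since the formula above requires only the invertibility of $\mathrm{Cov}(\setC,\setC)$ — the Gram matrix of the \emph{distinct} points of $\domainX_2\times\domainY_1$, which is invertible by non-degeneracy — and the implication ``zero conditional cross-covariance $\Rightarrow$ conditional independence'' for jointly Gaussian vectors is indifferent to degeneracy elsewhere; note moreover that any variable lying in $\setA\cap\setB$ automatically has $x\in\domainX_2$ and $y\in\domainY_1$ and hence lies in $\setC$, so nothing inconsistent is being asserted. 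I expect this bookkeeping (and choosing a consistent ordering of points) to be the only real obstacle rather than anything deep; a slicker but less self-contained alternative is to regard the restriction of $f$ to a finite grid $\domainX'\times\domainY'$ as a matrix-normal variable whose row and column covariances are built from $\kernelx$ and $\kernely$, and to read the claim off the conditional form of the matrix-normal distribution.
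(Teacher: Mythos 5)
Your proof is correct and follows essentially the same route as the paper's: reduce the claim to showing the conditional cross-covariance vanishes, write all four (cross-)covariance matrices as Kronecker products under a consistent ordering, and cancel using the standard Kronecker inverse and product identities. Your extra remark about possible overlap of the index sets is a sensible refinement that the paper leaves implicit, but it does not change the argument.
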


\begin{proof}
    Since $\setA$, $\setB$, and $\setC$ are jointly Gaussian, it is sufficient to show that the conditional covariance $\cov{\setA, \setB \mid \setC}$ is always $\mathbf{0}$.
    Assign an arbitrary ordering to the elements in each of $\domainX_1, \domainX_2, \domainY_1$, and $\domainY_2$, and let $\covmat_{\domainX_i \domainX_j}$ be the covariance matrix obtained by evaluating $\kernelx$ at each pair of points $\domainX_i$ and $\domainX_j$, such that the $(p,q)^{th}$ element of $\covmat_{\domainX_i \domainX_j}$ is $\kernelx$ evaluated at the $p^{th}$ and $q^{th}$ elements of $\domainX_i$ and $\domainX_j$ respectively.
    Let $\covmat_{\domainY_i, \domainY_j}$ be analogously defined for $\kernely$ and $\domainY_i$, $\domainY_j$.
    Denote the Kronecker product by $\kron$, and order the elements of $\setA$, $\setB$, and $\setC$ such that
    \begin{align}
        \cov{\setA, \setB} &= \covmat_{\domainX_1 \domainX_2} \kron \covmat_{\domainY_1 \domainY_2}, \nonumber \\
        \cov{\setA, \setC} &= \covmat_{\domainX_1 \domainX_2} \kron \covmat_{\domainY_1 \domainY_1}, \nonumber \\
        \cov{\setC, \setB} &= \covmat_{\domainX_2 \domainX_2} \kron \covmat_{\domainY_1 \domainY_2}, \nonumber \\
        \cov{\setC} &= \covmat_{\domainX_2 \domainX_2} \kron \covmat_{\domainY_1 \domainY_1}. \nonumber
    \end{align}
    $\kernelx$ and $\kernely$ are non-degenerate, so $\covmat_{\domainX_2 \domainX_2}$, $\covmat_{\domainY_1 \domainY_1}$, and $\cov{\setC}$ are invertible, and the conditional covariance is
    \begin{align}
        &\cov{\setA, \setB \mid \setC} \nonumber \\
        &\quad= \cov{\setA, \setB} - \cov{\setA, \setC} \inv{\cov{\setC}} \cov{\setC, \setB} \nonumber \\
        &\quad= \covmat_{\domainX_1 \domainX_2} \kron \covmat_{\domainY_1 \domainY_2} - \left( \covmat_{\domainX_1 \domainX_2} \kron \covmat_{\domainY_1 \domainY_1} \right) \invbr{\covmat_{\domainX_2 \domainX_2} \kron \covmat_{\domainY_1 \domainY_1}} \left( \covmat_{\domainX_2 \domainX_2} \kron \covmat_{\domainY_1 \domainY_2} \right) \nonumber \\
        &\quad= \covmat_{\domainX_1 \domainX_2} \kron \covmat_{\domainY_1 \domainY_2} - \left( \covmat_{\domainX_1 \domainX_2} \inv{\covmat_{\domainX_2 \domainX_2}} \covmat_{\domainX_2 \domainX_2}  \right) \left( \covmat_{\domainY_1 \domainY_1} \inv{\covmat_{\domainY_1 \domainY_1}} \covmat_{\domainY_1 \domainY_2} \right) \nonumber \\
        &\quad= \covmat_{\domainX_1 \domainX_2} \kron \covmat_{\domainY_1 \domainY_2} - \covmat_{\domainX_1 \domainX_2} \kron \covmat_{\domainY_1 \domainY_2} \nonumber \\
        &\quad= \mathbf{0}. \nonumber \qedhere
    \end{align}
\end{proof}

This result is depicted in \cref{fig:cond-indep-extension} -- specifically letting $\setA$ be the red squares, $\setB$ the blue squares, and $\setC$ the black dots. $\domainX_1$ are the $x$-coordinates of the red squares, $\domainX_2$ the $x$-coordinates of the blue squares / black circles, $\domainY_1$ the $y$-coordinates of the red squares / black circles, and $\domainY_2$ the $y$-coordinates of the blue squares.

\cref{lma:cond-indep-extension} establishes that a GP being separable implies the presented conditional independence between collections of points.
\citet{o1998markov} goes further in the single-point case, also showing that the conditional independence property implies separability, hence showing that the separability and conditional independence statements are equivalent.
While it seems plausible that such an equivalence could be established for collections of points, such a result is not needed in this work, and is therefore not pursued.

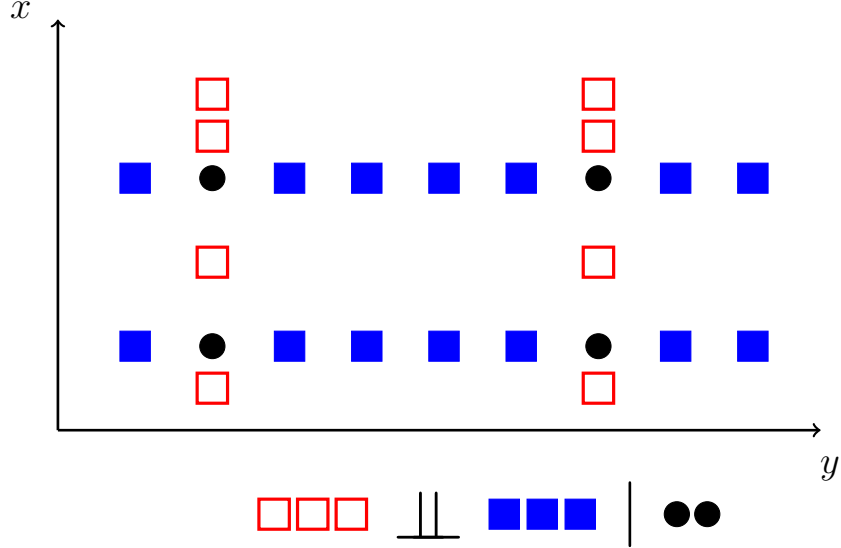
\begin{figure}[h]
    \centering
    \begin{tikzpicture}

        \def\myheight{2.2in}
        \def\mywidth{4in}

        \centering

        \node (lhstopcorner) at (0, \myheight) {};
        \node (rhsbottomcorner) at (\mywidth, 0) {};

        \draw[->, line width=1] (0, 0) -- (lhstopcorner);
        \draw[->, line width=1] (0, 0) -- (rhsbottomcorner);

        \foreach \r in {1, 3}
            \foreach \t in {1, 3, 4, 5, 6, 8, 9} {
                \node at (\mywidth * \t / 10, \myheight * \r / 5) [bluecircle] {};
            }

        \foreach \t in {2, 7}
            \foreach \r in {1, 3} {
                \node at (\mywidth * \t / 10, \myheight * \r / 5) [blackcircle] {};
            }
        \foreach \r in {1, 4, 7, 8}
            \foreach \t in {2, 7} {
                \node at (\mywidth * \t / 10, \myheight * \r / 10) [redcircle] {};
            }

        \node (foo) at (\mywidth * 1.4 / 5, -\myheight * 1 / 5) [redcircle] {};
        \node (foo1) at (0.2in + \mywidth * 1.4 / 5, -\myheight * 1 / 5) [redcircle] {};
        \node (foo2) at (0.4in + \mywidth * 1.4 / 5, -\myheight * 1 / 5) [redcircle] {};
        \node (bar) [right=0.2 of foo2] {\Huge $\bigCI$};
        \node (baz) [right=0.2 of bar] [bluecircle] {};
        \node (baz1) [right=0.7 of bar] [bluecircle] {};
        \node (baz2) [right=1.2 of bar] [bluecircle] {};
        \node (bleh) [right=0.2 of baz2] {\Huge $\mid$};
        \node (hmm) [right=0.2 of bleh] [blackcircle] {};
        \node (hmm1) [right=0.6 of bleh] [blackcircle] {};

        \node (leftmost) [left=0.1 of foo] {};
        \node (rightmost) [right=0.1 of hmm] {};

        \draw (\mywidth * 1 / 5, \myheight * 4 / 5) node {};

        \node (timevar) [below=0.1 of rhsbottomcorner] {\Large $y$};
        \node (spacevar) [left=0.1 of lhstopcorner] {\Large $x$};

    \end{tikzpicture}
    \caption{\label{fig:cond-indep-extension}Under a separable GP prior, the random variables at the red squares are conditionally independent of those at the blue squares given those at the black circles.}
\end{figure}

\subsection{Separability of the State-Space Approximation}

\citet{solin2016stochastic} shows in chapter 5 that a separable spatio-temporal GP $f$, whose time-kernel has Markov structure, can be expressed as another GP $\faux$ with some auxiliary dimensions.
The kernel of $\faux$ is not given explicitly -- instead it is expressed in terms of an infinite-dimensional Kalman filter.
Consequently, it is unclear without further investigation whether or not the kernel over $\faux$ is separable.
We show that, grouping together time and the latent dimension, it in fact separates over space and the grouped dimensions.

Let $\timevar \in \reals$ denote a point in time, and samples from $\faux(\spacevar, \timevar, \dimvar)$ be the random variable in $f$ associated with the spatial location $\spacevar$, time point $\timevar$, and latent dimension $\dimvar$.
Denote $\faux_\timevar = \Func{\faux}{\cdot, \timevar, \cdot}$, then \citet{solin2016stochastic} shows that
\begin{align}
    \faux_\timevar \sim \dGP{0, \Func{\kernelspace}{\spacevar, \spacevar^\prime} \Func{\alpha_{\timevar}}{d, d^\prime}} \label{eqn:marginal-kernel}
\end{align}
and, for any $\timevar^\prime$, kernel $\alpha_{\timevar} : \dimdomain \times \dimdomain \to \reals$ (isomorphic to a $D \times D$ matrix), kernel over space $\kernel_\spacevar$.
Let $\transitionoperator_{\timevar \to \timevar^\prime} : \inputdomain \times \dimdomain \to \inputdomain \times \dimdomain$ be the linear transition operator,
\begin{align}
    \faux_{\timevar^\prime} &= \transitionoperator_{\timevar \to \timevar^\prime} \faux_{\timevar} + q_{\timevar \to \timevar^\prime} \label{eqn:conditional-state-space} \\
    \Func{(\transitionoperator_{\timevar \to \timevar^\prime} \faux_{\timevar})}{\spacevar, \dimvar} &= \sum_{j=1}^D [\transition_{\timevar \to \timevar^\prime}]_{dj} \Func{\faux_\timevar}{\spacevar, j}, \label{eqn:lin-op-structure}
\end{align}
where each $q_{\timevar \to \timevar^\prime}$ is an independent GP, samples from which are functions $\spacedomain \times \dimdomain \to \reals$, and $\transition_{\timevar \to \timevar^\prime}$ is a $D \times D$ matrix of real numbers.

\begin{lemma}\label{lma:sep-state-space}
Let $\faux \sim \dGP{0, \kernelaux}$, the distribution over any time-marginal $\faux_\timevar$ be defined according to \cref{eqn:marginal-kernel}, and the conditional distribution over $\faux_{\timevar^\prime}$ given $\faux_{\timevar}$ \cref{eqn:conditional-state-space}. It follows that $\kernelaux$ is separable, and of the form
\begin{equation}
    \Func{\kernelaux}{(\spacevar, \timevar, \dimvar), (\spacevar^\prime, \timevar^\prime, \dimvar^\prime)} = \Func{\kernelspace}{\spacevar, \spacevar^\prime} \Func{\kerneltd}{(\timevar, \dimvar), (\timevar^\prime, \dimvar^\prime)}
\end{equation}
for some kernel $\kerneltd$.
\end{lemma}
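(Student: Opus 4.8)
The plan is to compute the covariance function of $\faux$ explicitly by propagating the given state-space recursion, and to observe that the spatial kernel $\kernelspace$ factors out cleanly at every step. First I would fix two time points and assume without loss of generality that $\timevar \leq \timevar^\prime$; the case $\timevar > \timevar^\prime$ follows by symmetry of the covariance. Then I would use \cref{eqn:conditional-state-space} to write $\faux_{\timevar^\prime} = \transitionoperator_{\timevar \to \timevar^\prime} \faux_{\timevar} + q_{\timevar \to \timevar^\prime}$, where $q_{\timevar \to \timevar^\prime}$ is independent of $\faux_\timevar$. Using the linearity in \cref{eqn:lin-op-structure}, the cross-covariance between $\Func{\faux}{\spacevar, \timevar, \dimvar}$ and $\Func{\faux}{\spacevar^\prime, \timevar^\prime, \dimvar^\prime}$ becomes $\sum_{j=1}^D [\transition_{\timevar \to \timevar^\prime}]_{\dimvar^\prime j} \, \Cov{}{\Func{\faux}{\spacevar, \timevar, \dimvar}, \Func{\faux}{\spacevar^\prime, \timevar, j}}$, since the $q$ term contributes nothing to the cross-covariance. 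The inner covariance is a same-time quantity, so \cref{eqn:marginal-kernel} applies: it equals $\Func{\kernelspace}{\spacevar, \spacevar^\prime} \Func{\alpha_\timevar}{\dimvar, j}$.

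The key point is that the spatial factor $\Func{\kernelspace}{\spacevar, \spacevar^\prime}$ does not depend on the summation index $j$, so it pulls out of the sum, giving
\begin{equation}
    \Func{\kernelaux}{(\spacevar, \timevar, \dimvar), (\spacevar^\prime, \timevar^\prime, \dimvar^\prime)} = \Func{\kernelspace}{\spacevar, \spacevar^\prime} \sum_{j=1}^D [\transition_{\timevar \to \timevar^\prime}]_{\dimvar^\prime j} \, \Func{\alpha_\timevar}{\dimvar, j}. \nonumber
\end{equation}
I would then define $\Func{\kerneltd}{(\timevar, \dimvar), (\timevar^\prime, \dimvar^\prime)} := \sum_{j=1}^D [\transition_{\timevar \to \timevar^\prime}]_{\dimvar^\prime j} \, \Func{\alpha_\timevar}{\dimvar, j}$ for $\timevar \leq \timevar^\prime$, and for $\timevar > \timevar^\prime$ set it to the value obtained by swapping the roles of the two arguments (so that the resulting $\kerneltd$ is a genuine symmetric kernel). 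Since the product of the separable factors reproduces $\kernelaux$ on both orderings, this establishes the claimed form. One should also check that $\kerneltd$ is positive semi-definite: this follows automatically because $\kernelaux$ is a valid GP kernel (being the covariance of $\faux$) and $\kernelspace$ is non-degenerate, so any Gram matrix of $\kerneltd$ is obtained from a Gram matrix of $\kernelaux$ restricted to a single spatial location, rescaled by the positive constant $\Func{\kernelspace}{\spacevar, \spacevar}$.

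The main obstacle I anticipate is bookkeeping around the two-sided time recursion and making sure the definition of $\kerneltd$ is consistent and symmetric — i.e., that the expression obtained by conditioning forward from $\timevar$ to $\timevar^\prime$ agrees, after the $\kernelspace$ factor is stripped, with what one gets from the time-marginal at a common point and vice versa. This amounts to verifying that $\Func{\alpha_\timevar}{\dimvar, j}$ and the transition matrices $\transition_{\timevar \to \timevar^\prime}$ interlock correctly (a Chapman–Kolmogorov-type consistency already guaranteed by the fact that these objects come from a well-defined GP $\faux$), rather than any genuinely hard computation. A secondary subtlety is that $\transitionoperator_{\timevar \to \timevar^\prime}$ acts only on the latent/time coordinates and is the identity on space; this is exactly what \cref{eqn:lin-op-structure} encodes, and it is the structural fact that makes the spatial kernel survive untouched through the recursion.
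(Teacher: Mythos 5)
Your proposal is correct and follows essentially the same route as the paper's proof: expand the cross-covariance via the conditional recursion in \cref{eqn:conditional-state-space}, drop the independent noise term, apply the linearity in \cref{eqn:lin-op-structure}, invoke the same-time marginal in \cref{eqn:marginal-kernel}, and pull $\Func{\kernelspace}{\spacevar,\spacevar^\prime}$ out of the sum to define $\kerneltd$. Your additional remarks on symmetrising across the two time orderings and on positive semi-definiteness of $\kerneltd$ are harmless extra bookkeeping that the paper leaves implicit.
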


\begin{proof}
    \begin{align}
        \Func{\kernelaux}{(\spacevar, \timevar, \dimvar), (\spacevar^\prime, \timevar^\prime, \dimvar^\prime)} :=&\, \cov{\Func{\faux}{\spacevar, \timevar, \dimvar}, \Func{\faux}{\spacevar^\prime, \timevar^\prime, \dimvar^\prime}} \nonumber \\
        =&\, \cov{\Func{\faux_\timevar}{\spacevar, \dimvar}, \Func{\faux_{\timevar^\prime}}{\spacevar^\prime, \dimvar^\prime}} \nonumber \\
        =&\, \Expect{}{\Func{\faux_\timevar}{\spacevar, \dimvar} \Func{\faux_{\timevar^\prime}}{\spacevar^\prime, \dimvar^\prime}} \nonumber \\
        =&\, \Expect{}{\Func{\faux_\timevar}{\spacevar, \dimvar} \{ \Func{(\transitionoperator_{\timevar \to \timevar^\prime} \faux_{\timevar})}{\spacevar^\prime, \dimvar^\prime} + \Func{q_{\timevar \to \timevar^\prime}}{\spacevar^\prime, \dimvar^\prime} \}} \nonumber \\
        =&\, \Expect{}{\Func{\faux_\timevar}{\spacevar, \dimvar} \{ \Func{(\transitionoperator_{\timevar \to \timevar^\prime} \faux_{\timevar})}{\spacevar^\prime, \dimvar^\prime}}. \nonumber
    \end{align}
    where the penultimate equality follows from independence.
    Applying \cref{eqn:lin-op-structure} yields
    \begin{align}
        \Func{\kernelaux}{(\spacevar, \timevar, \dimvar), (\spacevar^\prime, \timevar^\prime, \dimvar^\prime)} =&\, \sum_{j=1}^D [\transition_{\timevar \to \timevar^\prime}]_{d^\prime j} \Expect{}{\Func{\faux_{\timevar}}{\spacevar, \dimvar} \Func{\faux_{\timevar}}{\spacevar^\prime, j} } \nonumber \\
        =&\, \sum_{j=1}^D [\transition_{\timevar \to \timevar^\prime}]_{d^\prime j} \Expect{}{\Func{\faux}{\spacevar, \timevar, \dimvar} \Func{\faux}{\spacevar^\prime, \timevar, j} } \nonumber
    \end{align}
    Applying \cref{eqn:marginal-kernel} yields
    \begin{align}
        \Func{\kernelaux}{(\spacevar, \timevar, \dimvar), (\spacevar^\prime, \timevar^\prime, \dimvar^\prime)} =&\, \sum_{j=1}^D [\transition_{\timevar \to \timevar^\prime}]_{d^\prime j} \Func{\kernelspace}{\spacevar, \spacevar^\prime} \Func{\alpha_{\timevar}}{(\dimvar, j} \nonumber \\
        =&\, \Func{\kernelspace}{\spacevar, \spacevar^\prime} \sum_{j=1}^D [\transition_{\timevar \to \timevar^\prime}]_{d^\prime j}  \Func{\alpha_{\timevar}}{\dimvar, j} \nonumber \\
        =&\, \Func{\kernelspace}{\spacevar, \spacevar^\prime} \Func{\kerneltd}{(\timevar, \dimvar), (\timevar^\prime, \dimvar^\prime)} \nonumber
    \end{align}
    where
    \begin{equation}
        \Func{\kerneltd}{(\timevar, \dimvar), (\timevar^\prime, \dimvar^\prime)} := \sum_{j=1}^D [\transition_{\timevar \to \timevar^\prime}]_{d^\prime j}  \Func{\alpha_{\timevar}}{\dimvar, j}. \nonumber
    \end{equation}
\end{proof}

\subsection{Conditional Independence Structure of Observations and Pseudo-Points Under a Separable Prior}
\label{sec:cond-indep-pseudo}

\newcommand{\zobsaux}{\bar{\vec{z}}}
\newcommand{\spacepseudoinputs}{\mathcal{Z}_\spacevar}
\newcommand{\dimpseudoinputs}{\{1, ..., D\}}

\newcommand{\timeinputs}{\mathcal{T}}

Assume that the set of pseudo-inputs $\zobsaux$ form the rectilinear grid
\begin{align}
    \zobsaux := \spacepseudoinputs \times \timeinputs \times \dimpseudoinputs
\end{align}
where $\times$ denotes the Cartesian product, and $\spacepseudoinputs \subset \inputdomain$ and $\timeinputs \subset \reals$ are the finite sets of spatial and temporal locations at which pseudo-inputs are present, with sizes $M_\timevar$ and $T$ respectively.
Let the pseudo-points be
\begin{equation}
    \uobsaux := \{ \faux(\spacevar, \timevar, \dimvar) 
    \mid (\spacevar, \timevar, \dimvar) \in \zobsaux \}.
\end{equation}
Furthermore, let
\begin{equation}
    \uobs_\timevar := \{\faux(\spacevar, \timevar, 1) \mid \spacevar \in \spacepseudoinputs \},
\end{equation}
then $\uobsaux \setminus \uobs_\timevar$ is the collection of all pseudo-points not in $\uobs_\timevar$.

Let $\inputdomain_1, ..., \inputdomain_{T} \subset \inputdomain$ be finite sets of of points in space, one for each point in $\timeinputs$.
The set of points
\begin{equation}
    \xinp_\timevar := \{ (\spacevar, \timevar, 1) \mid \spacevar \in \inputdomain_\timevar \}
\end{equation}
are the elements of $\faux$ which are observed (noisily) at time $\timevar$, so let
\begin{equation}
    \fobs_\timevar := \{ \faux(\spacevar, \timevar, \dimvar) \mid (\spacevar, \timevar, \dimvar) \in \xinp_\timevar \}
\end{equation}

It is now possible to the present the key result:
\begin{theorem}\label{thm:cond-indep}
    \begin{equation}
        \fobs_\timevar \bigCI \uobsaux \setminus \uobs_\timevar \mid \uobs_\timevar.
    \end{equation}
    That is: $\fobs_\timevar$ is conditionally independent of all pseudo-points not in the first latent dimension of $\faux$ at time $\timevar$, given all of the pseudo-points in the first latent dimension of $\faux$ at time $\timevar$ -- \cref{fig:cond-indep-pseudo-points} visualises this property.
\end{theorem}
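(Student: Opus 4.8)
The plan is to read the statement off directly from \cref{lma:cond-indep-extension}, applied to the augmented process $\faux$, once the correct product structure has been identified. The two ingredients are \cref{lma:sep-state-space}, which establishes that $\faux$ is separable, and the non-degeneracy hypothesis needed by \cref{lma:cond-indep-extension}.

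First I would invoke \cref{lma:sep-state-space}: $\faux$ is a centred Gaussian process whose kernel separates across space and the grouped ``time $\times$ latent-dimension'' coordinate,
\[
    \Func{\kernelaux}{(\spacevar, \timevar, \dimvar), (\spacevar^\prime, \timevar^\prime, \dimvar^\prime)} = \Func{\kernelspace}{\spacevar, \spacevar^\prime}\,\Func{\kerneltd}{(\timevar, \dimvar), (\timevar^\prime, \dimvar^\prime)} .
\]
So $\faux$ matches the hypothesis of \cref{lma:cond-indep-extension} with $\domainX = \inputdomain$, $\domainY = \reals \times \dimpseudoinputs$, $\kernelx = \kernelspace$ and $\kernely = \kerneltd$.

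Next I would choose the index sets so that the collections $\setA$, $\setB$, $\setC$ of \cref{lma:cond-indep-extension} become $\fobs_\timevar$, $\uobsaux \setminus \uobs_\timevar$ and $\uobs_\timevar$ respectively. Take $\domainX_1 = \inputdomain_\timevar$, $\domainX_2 = \spacepseudoinputs$, $\domainY_1 = \{(\timevar, 1)\}$ and $\domainY_2 = (\timeinputs \times \dimpseudoinputs) \setminus \{(\timevar, 1)\}$. From the definitions in this subsection: every point observed at time $\timevar$ lies in latent dimension $1$, so $\faux(\domainX_1, \domainY_1) = \fobs_\timevar$; the pseudo-inputs occupy the same spatial grid $\spacepseudoinputs$ at every time, so deleting $\uobs_\timevar$ removes exactly the grouped coordinate $(\timevar, 1)$, giving $\faux(\domainX_2, \domainY_2) = \uobsaux \setminus \uobs_\timevar$; and $\faux(\domainX_2, \domainY_1) = \uobs_\timevar$. \cref{lma:cond-indep-extension} then yields $\fobs_\timevar \bigCI \uobsaux \setminus \uobs_\timevar \mid \uobs_\timevar$, which is the claim.

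The step I would handle most carefully, though it is technical rather than deep, is the non-degeneracy hypothesis of \cref{lma:cond-indep-extension}; its proof uses invertibility of $\covmat_{\domainX_2 \domainX_2}$ and of $\covmat_{\domainY_1 \domainY_1}$. With the choices above $\domainY_1 = \{(\timevar, 1)\}$ is a singleton, so $\covmat_{\domainY_1 \domainY_1}$ is just the positive scalar variance of $\faux(\cdot, \timevar, 1)$ and invertibility is automatic; the genuine requirement is therefore only that the spatial kernel $\kernelspace$ be non-degenerate on the finite pseudo-input set $\spacepseudoinputs$, which is the standing assumption on $\kernelspace$ (and, for completeness, $\kerneltd$ is non-degenerate for the canonical state-space forms of the kernels considered in \cref{sec:state-space}). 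Everything else amounts to bookkeeping of index sets against the template of \cref{lma:cond-indep-extension}.
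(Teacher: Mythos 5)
Your proposal is correct and follows essentially the same route as the paper's proof: invoke \cref{lma:sep-state-space} to establish separability of $\faux$ over $\inputdomain$ and the grouped $\timedomain \times \dimdomain$ coordinate, then apply \cref{lma:cond-indep-extension} with exactly the same choices $\domainX_1 = \inputdomain_\timevar$, $\domainX_2 = \spacepseudoinputs$, $\domainY_1 = \{(\timevar, 1)\}$, $\domainY_2 = [\timeinputs \times \{1,\ldots,D\}] \setminus \domainY_1$. Your explicit check of the non-degeneracy hypothesis (trivial for the singleton $\domainY_1$, and reducing to non-degeneracy of $\kernelspace$ on $\spacepseudoinputs$) is a small point of added care that the paper's proof leaves implicit.
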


\begin{proof}
    Let $\domainX := \inputdomain$ and $\domainY := \timedomain \times \dimdomain$ -- i.e. group together time and latent dimension -- and (abusing notation) let
    \begin{equation}
        \faux(\spacevar, (\timevar, \dimvar)) := \faux(\spacevar, \timevar, \dimvar), \quad \spacevar \in \domainX, \quad (\timevar, \dimvar) \in \domainY.
    \end{equation}
    By \cref{lma:sep-state-space} the kernel over $\faux$ is separable across $\domainX$ and $\domainY$.
    Applying \cref{lma:cond-indep-extension} to $\faux$ and
    \begin{align}
        \domainX_1 := \inputdomain_\timevar, \quad \domainX_2 := \spacepseudoinputs, \quad \domainY_1 := \{(\timevar, 1)\}, \quad \textrm{ and } \domainY_2 := [ \timeinputs \times \{1, ..., D\} ] \setminus \domainY_1.
    \end{align}
    yields the desired result, as $\faux_{\domainX_1, \domainY_1} = \fobs_\timevar$, $\faux_{\domainX_2, \domainY_2} = \uobsaux \setminus \uobs_\timevar$, and $\faux_{\domainX_2, \domainY_1} = \uobs_\timevar$.
\end{proof}

This result is depicted in \cref{fig:cond-indep-pseudo-points}. 
Of the entire 3 dimensional grid of pseudo-points, only those in the first latent dimension at the same time as $\fobs_\timevar$ are needed to achieve conditional independence from all others.

\begin{figure}[h]
    \centering
    \begin{tikzpicture}

        \def\myheight{1.8in}
        \def\mywidth{2in}

        \centering

        \node (lhstopcorner) at (0, \myheight) {};
        \node (rhsbottomcorner) at (\mywidth, 0) {};

        \draw[->, line width=1] (0, 0) -- (lhstopcorner);
        \draw[->, line width=1] (0, 0) -- (rhsbottomcorner);

        \foreach \r in {1, 3}
            \foreach \t in {1, 3, 4, 9} {
                \node at (\mywidth * \t / 10, \myheight * \r / 5) [bluecircle] {};
            }

        \foreach \r in {1, 3}
            \foreach \t in {7}{
                \node at (\mywidth * \t / 10, \myheight * \r / 5) [blackcircle] {};
            }
        \foreach \r in {1, 4, 8}
            \foreach \t in {7} {
                \node at (\mywidth * \t / 10, \myheight * \r / 10) [redcircle] {};
            }

        \node (foo) at (\mywidth * 3.5 / 5, -\myheight * 1 / 5) [redcircle] {};
        \node (foo1) at (0.2in + \mywidth * 3.5 / 5, -\myheight * 1 / 5) [redcircle] {};
        \node (foo2) at (0.4in + \mywidth * 3.5 / 5, -\myheight * 1 / 5) [redcircle] {};
        \node (bar) [right=0.2 of foo2] {\Huge $\bigCI$};
        \node (baz) [right=0.2 of bar] [bluecircle] {};
        \node (baz1) [right=0.7 of bar] [bluecircle] {};
        \node (baz2) [right=1.2 of bar] [bluecircle] {};
        \node (bleh) [right=0.2 of baz2] {\Huge $\mid$};
        \node (hmm) [right=0.2 of bleh] [blackcircle] {};
        \node (hmm1) [right=0.6 of bleh] [blackcircle] {};

        \node (leftmost) [left=0.1 of foo] {};
        \node (rightmost) [right=0.1 of hmm] {};

        \draw (\mywidth * 1 / 5, \myheight * 4 / 5) node {};

        \node (timevar) [below=0.1 of rhsbottomcorner] {\Large $\timevar$};
        \node (spacevar) [left=0.1 of lhstopcorner] {\Large $\spacevar$};
        \node (dimvar) [right=0.1 of lhstopcorner] {\Large $\dimvar = 1$};

        \def\mywidthnext{2.4in}

        \node (bottomleftcorner) at (\mywidthnext, 0) {};
        \node (lhstopcornerb) at (\mywidthnext, \myheight) {};
        \node (rhsbottomcornerb) at (\mywidthnext + \mywidth, 0) {};

        \draw[->, line width=1] (\mywidthnext, 0) -- (lhstopcornerb);
        \draw[->, line width=1] (\mywidthnext, 0) -- (rhsbottomcornerb);

        \foreach \r in {1, 3}
            \foreach \t in {1, 3, 4, 7, 9} {
                \node at (\mywidthnext + \mywidth * \t / 10, \myheight * \r / 5) [bluecircle] {};
            }

        \node (timevar) [below=0.1 of rhsbottomcornerb] {\Large $\timevar$};
        \node (spacevar) [left=0.1 of lhstopcornerb] {\Large $\spacevar$};
        \node (dimvar) [right=0.1 of lhstopcornerb] {\Large $\dimvar > 1$};

    \end{tikzpicture}
    \caption{\label{fig:cond-indep-pseudo-points}Slices of the 3-dimensional rectilinear grid of pseudo-points / inputs, as well as inputs of observations, depicting of the conditional independence structure presented in \cref{thm:cond-indep}. Unfilled red squares correspond to $\fobs_\timevar$, black circles to $\uobs_\timevar$, and filled blue squares to $\uobsaux \setminus \uobs_\timevar$. The left-hand side corresponds to $\dimvar=1$, while the right-hand side corresponds to $\dimvar > 1$. Notice that $\uobs_\timevar$ and $\fobs_\timevar$ only appear in the $d=1$ slice.}
\end{figure}

\subsection{Conditional Independence Structure under a Sum-Separable Prior}
\label{sec:cond-indep-sum-separable}

\newcommand{\fauxp}{\faux^p}
\newcommand{\fauxs}{\faux^s}

Recall that a sum-separable GP $\fauxs$ is defined to be a GP of the form
\begin{equation}
    \fauxs := \sum_{p=1}^P \fauxp, \quad \fauxp \sim \dGP{0, \kernelaux^p},
\end{equation}
where each $\fauxp$ is an independent separable GP.
Locate rectilinear grids of pseudo-inputs in each of the separable processes:
\begin{align}
    \zobsaux^p := \spacepseudoinputs^p \times \timeinputs \times \dimdomain
\end{align}
where $\spacepseudoinputs^p$ are a collection of points in space which are specific to each $p$.
Each of these $P$ grids of points is of the same form as those utilised for separable processes previously.
Define sets of pseudo-points for each process:
\begin{align}
    \uobsaux^p :=&\, \{ \fauxp(\spacevar, \timevar, \dimvar) \mid (\spacevar, \timevar, \dimvar) \in \zobsaux^p \}, \\
    \uobs_\timevar^p :=&\, \{ \fauxp(\spacevar, \timevar, 1) \mid \spacevar \in \spacepseudoinputs^p \}, \\
    p \in& \, \{1, ..., P \},
\end{align}
and sets containing all of the of pseudo-points through the union of the above process-specific pseudo-points:
\begin{align}
    \uobsaux :=&\, \cup_{p=1}^P \uobsaux^p, \\
    \uobs_\timevar :=&\, \cup_{p=1}^P \uobs_\timevar^p.
\end{align}

Furthermore, let
\begin{align}
    \fobs_\timevar^p :=&\, \{ \fauxp(\spacevar, \timevar, 1) \mid \spacevar \in \inputdomain_\timevar \} \\
    \fobs_\timevar^s :=&\, \{ \fauxs(\spacevar, \timevar, 1) \mid \spacevar \in \inputdomain_\timevar \}
\end{align}

\begin{theorem}[Conditional Independence in Sum-Separable GPs]
    \begin{equation}
        \fobs_\timevar^s \bigCI \uobsaux \setminus \uobs_\timevar \mid \uobs_\timevar \nonumber
    \end{equation}
\end{theorem}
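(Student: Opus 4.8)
The plan is to reduce the statement to the single-component result already established in \cref{thm:cond-indep}: since $\fauxs$ is sum-separable but \emph{not} separable, \cref{lma:cond-indep-extension} cannot be applied to $\fauxs$ directly, but each summand $\fauxp$ is separable and \cref{thm:cond-indep} applies to it. As in the proof of \cref{lma:cond-indep-extension}, all the variables involved -- $\fobs_\timevar^s$, $\uobs_\timevar$ and $\uobsaux$ -- are jointly Gaussian, so it suffices to show that the conditional covariance $\cov{\fobs_\timevar^s, \uobsaux \setminus \uobs_\timevar \mid \uobs_\timevar}$ is the zero matrix.

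First I would dispatch the bookkeeping: because $\uobs_\timevar^p \subseteq \uobsaux^p$ for every $p$ and the pseudo-points of distinct components are distinct random variables, $\uobsaux \setminus \uobs_\timevar = \bigcup_{p=1}^P (\uobsaux^p \setminus \uobs_\timevar^p)$, a union disjoint across $p$; likewise $\fobs_\timevar^s$ is the pointwise sum $\sum_{p=1}^P \fobs_\timevar^p$. Next I would exploit the a priori independence of the $\fauxp$: the covariance between any variable built from $\fauxp$ and any variable built from $\faux^q$ with $p \ne q$ vanishes. This makes $\cov{\uobs_\timevar}$ block-diagonal in the component index (hence so is its inverse), makes $\cov{\uobs_\timevar, \uobsaux \setminus \uobs_\timevar}$ block-diagonal, and collapses each cross-covariance with $\fobs_\timevar^s$ onto the matching component -- only the $p$-th summand of $\fobs_\timevar^s$ contributes to the block indexed by $\uobsaux^p \setminus \uobs_\timevar^p$ or by $\uobs_\timevar^p$.

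Assembling the Schur-complement expression $\cov{\fobs_\timevar^s, \uobsaux \setminus \uobs_\timevar} - \cov{\fobs_\timevar^s, \uobs_\timevar} \inv{\cov{\uobs_\timevar}} \cov{\uobs_\timevar, \uobsaux \setminus \uobs_\timevar}$ block by block, the $p$-th block then reduces exactly to $\cov{\fobs_\timevar^p, \uobsaux^p \setminus \uobs_\timevar^p \mid \uobs_\timevar^p}$, which is $\mathbf{0}$ by \cref{thm:cond-indep} applied to the separable process $\fauxp$; hence the full conditional covariance is $\mathbf{0}$. Equivalently, and perhaps more transparently: conditioning on the whole union $\uobs_\timevar$ leaves the $\fauxp$ mutually independent, because for each $p$ the sets $\uobs_\timevar^q$ with $q \ne p$ are functions of processes independent of $\fauxp$ and thus irrelevant to $\fauxp$ once $\uobs_\timevar^p$ is fixed; within component $p$, \cref{thm:cond-indep} gives $\fobs_\timevar^p \bigCI \uobsaux^p \setminus \uobs_\timevar^p \mid \uobs_\timevar^p$; stacking over $p$ yields $(\fobs_\timevar^1, \dots, \fobs_\timevar^P) \bigCI (\uobsaux^1 \setminus \uobs_\timevar^1, \dots, \uobsaux^P \setminus \uobs_\timevar^P) \mid \uobs_\timevar$, and $\fobs_\timevar^s$ is a function of the former while $\uobsaux \setminus \uobs_\timevar$ is precisely the latter.

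The computations are routine Kronecker- and Schur-complement manipulation; the one genuinely delicate point I expect is the ``conditioning on extra, independent pseudo-points is harmless'' step -- making precise that conditioning on all of $\uobs_\timevar$ rather than only the component-specific $\uobs_\timevar^p$ does not disturb the per-component independence structure -- together with checking the index bookkeeping for the disjoint union $\uobsaux \setminus \uobs_\timevar = \bigcup_{p=1}^P (\uobsaux^p \setminus \uobs_\timevar^p)$, which implicitly uses that pseudo-points of different components are never identified.
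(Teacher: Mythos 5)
Your proposal is correct and follows essentially the same route as the paper: reduce to showing the conditional covariance $\cov{\fobs_\timevar^s, \uobsaux \setminus \uobs_\timevar \mid \uobs_\timevar}$ vanishes, use the independence of the components $\fauxp$ to obtain block-diagonal structure in $\cov{\uobs_\timevar}$ and $\cov{\uobs_\timevar, \uobsaux \setminus \uobs_\timevar}$ and to collapse the cross-covariances with $\fobs_\timevar^s$ onto matching components, and then identify each block of the Schur complement with $\cov{\fobs_\timevar^p, \uobsaux^p \setminus \uobs_\timevar^p \mid \uobs_\timevar^p} = \mathbf{0}$ via \cref{thm:cond-indep}. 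The alternative per-component conditional-independence phrasing you add is a harmless restatement of the same argument.
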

\begin{proof}
    As in \cref{thm:cond-indep}, it suffices to show that $\cov{\fobs_\timevar^s, \uobsaux \setminus \uobs_\timevar \mid \uobs_\timevar} = \mathbf{0}$.
    It is the case that
    \begin{equation}
        \cov{\faux_s(\spacevar, \timevar, \dimvar), \faux_p(\spacevar^\prime, \timevar^\prime, \dimvar^\prime)} = \cov{\faux_p(\spacevar, \timevar, \dimvar), \faux_p(\spacevar^\prime, \timevar^\prime, \dimvar^\prime)},
    \end{equation}
    and the covariance between any points in $\faux_p$ and $\faux_{p^\prime}$ is $0$ if $p \neq p^\prime$,
    so
    \begin{align}
        \cov{\fobs_\timevar^s, \uobsaux \setminus \uobs_\timevar} &= \begin{bmatrix}
            \cov{\fobs_\timevar^1, \uobsaux^1 \setminus \uobs_\timevar^1} & \hdots & \cov{\fobs_\timevar^P, \uobsaux^P \setminus \uobs_\timevar^P}
        \end{bmatrix} \nonumber \\
        \cov{\fobs_\timevar^s, \uobs_\timevar} &= \begin{bmatrix}
            \cov{\fobs_\timevar^1, \uobs_\timevar^1} & \hdots & \cov{\fobs_\timevar^P, \uobs_\timevar^P}
        \end{bmatrix} \nonumber \\
        \cov{\uobs_\timevar} &= \begin{bmatrix}
            \cov{\uobs_\timevar^1} & & \mathbf{0} \\
             & \ddots & \\
            \mathbf{0} & & \cov{\uobs_\timevar^P}
        \end{bmatrix} \nonumber \\
        \cov{\uobs_\timevar, \uobsaux \setminus \uobs_\timevar} &= \begin{bmatrix}
            \cov{\uobs_\timevar^1, \uobsaux \setminus \uobs_\timevar^1} & & \mathbf{0} \\
             & \ddots & \\
            \mathbf{0} & & \cov{\uobs_\timevar^P, \uobsaux \setminus \uobs_\timevar^P}
        \end{bmatrix}. \nonumber
    \end{align}
    Therefore
    \begin{align}
        \cov{\fobs_\timevar^s, \uobsaux \setminus \uobs_\timevar \mid \uobs_\timevar} &= \cov{\fobs_\timevar^s, \uobsaux \setminus \uobs_\timevar} - \cov{\fobs_\timevar^s, \uobs_\timevar} [\cov{\uobs_\timevar}]^{-1} \cov{\uobs_\timevar, \uobsaux \setminus \uobs_\timevar} \nonumber \\
        &= \begin{bmatrix}
            \cov{\fobs_\timevar^1, \uobsaux^1 \setminus \uobs_\timevar^1 \mid \uobs_\timevar^1} & \hdots & \cov{\fobs_\timevar^P, \uobsaux^P \setminus \uobs_\timevar^P \mid \uobs_\timevar^P}
        \end{bmatrix} \nonumber \\
        &= \begin{bmatrix}
            \mathbf{0} & \hdots & \mathbf{0}
        \end{bmatrix}
    \end{align}
    where the final equality follows from \cref{thm:cond-indep}.
\end{proof}

\subsection{Block-Diagonal Structure}
\label{sec:block-diag-structure}

Furthermore, this conditional independence property implies that
\begin{equation}
    \covmat_{\fobs_t \uobsaux } \inv{\covmat}_{\uobsaux} = \begin{bmatrix}
        \mathbf{0} & \hdots & \covmat_{\fobsaux_t \uobsaux_t} \inv{\covmat}_{\uobsaux_t} & \hdots & \mathbf{0}
    \end{bmatrix}. \label{eqn:mean-prop}
\end{equation}
This is easily proven by considering that, were it not the case, then
\begin{equation}
    \Expect{}{\fobs_t \mid \uobs} \neq \Expect{}{\fobs_t \mid \uobs_t},
\end{equation}
for any non-zero $\uobs$, which is a contradiction.
It follows from repeated application of \cref{eqn:mean-prop} that the larger matrix $\covmat_{\fobs \uobsaux} \inv{\covmat}_{\uobsaux}$ is block-diagonal, and is given by
\begin{equation}
    \covmat_{\fobs \uobsaux} \inv{\covmat}_{\uobsaux} := \begin{bmatrix}
        \covmat_{\fobs_1 \uobsaux } \inv{\covmat}_{\uobsaux} \\
        \vdots \\
        \covmat_{\fobs_T \uobsaux } \inv{\covmat}_{\uobsaux}
    \end{bmatrix} =
    \begin{bmatrix}
        \covmat_{\fobs_1 \uobsaux_1} \inv{\covmat}_{\uobsaux_1} & & \mathbf{0} \\
        & \ddots & \\
        \mathbf{0} & & \covmat_{\fobs_T \uobsaux_T} \inv{\covmat}_{\uobsaux_T}.
    \end{bmatrix}
\end{equation}

Similar manipulations reveal that the same property holds in the sum-separable case.

\section{Conditional Independence Properties of Optimal Approximate Observation Models}
\label{sec:optimal-approximate-observation-models}

Conditional independence structure in the observation model is reflected in the optimal approximate posterior, regardless the precise form of the observation model (Gaussian, Bernoulli, etc).
Moreover, for Gaussian observation models, it is possible to find the model in which performing exact inference yields the optimal approximate posterior.
These two properties are derived in the following two subsections.

\subsection{Conditional Independence Structure}
\label{sec:optimal-approx-posterior-structure}

This is only a slight extension of the result of \citet{seeger1999bayesian} and \citet{opper2009variational}, which generalises from reconstruction terms which depend on only a one dimensional marginal of the Gaussian in question, to non-overlapping multi-dimensional marginals of arbitrary size.
Let
\begin{equation}
    \Prob{\uobsaux} := \pdfNorm{\uobs}{\meanvec, \covmat}, \quad \qFunc{\uobsaux} := \pdfNorm{\uobsaux}{\meanvecq, \covmatq},
\end{equation}
for $\meanvec, \meanvecq \in \reals^N$ and positive-definite matrices $\covmat, \covmatq \in \psdmats^N$.
Partition $\uobsaux$ into a collection of $T$ sets, $\uobsaux_1, ..., \uobsaux_T$, such that
\begin{equation}
    \uobsaux = \begin{bmatrix}
        \uobsaux_1 \\
        \vdots \\
        \uobsaux_T
    \end{bmatrix}.
\end{equation}
Let $\meanvec_t$ and $\meanvecq_t$ be the blocks of $\meanvec$ and $\meanvecq$ corresponding to $\uobsaux_t$.
Similarly let $\covmat_{t}$ and $\covmatq_{t}$ the on-diagonal blocks of $\covmat$ and $\covmatq$ corresponding to $\uobsaux_t$.
Then the prior and approximate posterior marginals over $\uobsaux_t$ are
\begin{equation}
    \Prob{\uobsaux_t} = \pdfNorm{\uobsaux_t}{\meanvec_t, \covmat_{t}}, \quad \qFunc{\uobsaux_t} = \pdfNorm{\uobsaux_t}{\meanvecq_t, \covmatq_t},
\end{equation}
due to the marginalisation property of Gaussians.

Assume that the reconstruction term can be written as a sum over terms specific to each $\uobsaux_t$,
\begin{equation}
    \Func{r}{\meanvecq, \covmatq} = \sum_{t=1}^T \Func{r_t}{\meanvecq_t, \covmatq_t},
\end{equation}
for functions $r_1, ..., r_T$.
This is a useful assumption because it is satisfied for the model class considered in this work.
Under this assumption, the optimal Gaussian approximate posterior density is proportional to
\begin{equation}
    \Prob{\uobsaux} \prod_{t=1}^T \pdfNorm{\approxobs_t}{\uobsaux_t, [\prect_t]^{-1}} \label{eqn:approx-post-model-app}
\end{equation}
for appropriately-sized surrogate observations $\approxobs_1, ..., \approxobs_T$ and positive-definite precision matrices $\prect_1, ..., \prect_T$, which is to say that the optimal approximate posterior is equivalent to the exact posterior under a ``surrogate'' Gaussian observation model whose density factorises across $\uobsaux_1, ..., \uobsaux_T$.

\newcommand{\etaq}{\eta^{\textrm{q}}}

A straightforward way to arrive at this result is via a standard result involving exponential families.
Consider an exponential family prior
\begin{equation}
    \Prob{\uobsaux} = \Func{h}{\uobsaux} \Func{\exp}{\inner{}{\eta}{\Func{\phi}{\uobsaux}} - \Func{A}{\eta}},
\end{equation}
where $h$ is the base measure, $\phi$ the sufficient-statistic function, $A$ the log partition function, and $\eta$ the natural parameters.
and approximate posterior in the same family,
\begin{equation}
    \qFunc{\uobsaux} = \Func{h}{\uobsaux} \Func{\exp}{\inner{}{\etaq}{\Func{\phi}{\uobsaux}} - \Func{A}{\etaq}},
\end{equation}
which differs from $p$ only in its natural parameters $\etaq$.
Let
\begin{equation}
    \Cond{\uobsaux}{\yvec} \propto \Prob{\uobsaux} \Cond{\yobs}{\uobsaux}
\end{equation}
be the posterior over $\uobsaux$ given observations $\yvec$ under an arbitrary observation model $\Cond{\yobs}{\uobsaux}$.
It is well-known (see e.g. \citet{khan2018fast}) that the $\etaq$ minimising $\DivKL{\qFunc{\uobsaux}}{\Cond{\uobsaux}{\yobs}}$ satisfies
\begin{equation}
    \etaq = \eta + \Func{(\nabla_{\mu} r)}{\Func{\mu}{\etaq}}. \label{eqn:opt-nat-params}
\end{equation}
where $\mu$ denotes the expectation parameters $\mu := \Expect{q}{\Func{\phi}{\uobsaux}}$ and, in an abuse of notation, $\mu(\eta)$ denotes the function computing the mean parameter for any particular natural parameter.
Given the canonical parameters $\meanvec$ and $\covmat$ of a Gaussian, and letting $\Lambda := [\covmat]^{-1}$, its natural parameters and mean parameters are
\begin{equation}
    \eta = (\eta_1, \eta_2) = (\Lambda \meanvec, -\frac{1}{2} \Lambda), \quad \mu = (\mu_1, \mu_2) = (\meanvec, \meanvec \transpose{\meanvec} + \covmat).
\end{equation}

Let $\precq := [\covmatq]^{-1}$ be the precision of $q$, $\Lambda := \inv{\covmat}$ the precision of $p$, and recall that the optimal Gaussian approximate posterior satisfies
\begin{equation}
    \precq = \Lambda - 2 \, \Func{(\nabla_{\covmatq} r)}{\meanvecq, \covmatq}.
\end{equation}
Due to the assumed structure in $r$, $\nabla_{\covmatq} r$ is block-diagonal:
\begin{equation}
    \Func{(\nabla_{\covmatq} r)}{\meanvecq, \covmatq} = \begin{bmatrix}
        \Func{(\nabla_{\covmatq_{1}} r_1)}{\meanvecq_1, \covmatq_1} & & \mathbf{0} \\
         & \ddots & \\
        \mathbf{0} & & \Func{(\nabla_{\covmatq_{T}} r_T)}{\meanvecq_T, \covmatq_T}
    \end{bmatrix}.
\end{equation}
Observe that each on-diagonal block involves only the corresponding term in $r$, i.e. the $t^{th}$ block is only a function of $r_t$.

Equating the exact posterior precision under the approximate model in \cref{eqn:approx-post-model-app} with the optimal approximate posterior precision yields
\begin{equation}
    \Lambda + \begin{bmatrix}
        \approxprec_1 & & \mathbf{0} \\
        & \ddots & \\
        \mathbf{0} & & \approxprec_T
    \end{bmatrix} =
    \Lambda + \begin{bmatrix}
        -2 \, \Func{(\nabla_{\covmatq_{1}} r_1)}{\meanvecq_1, \covmatq_1} & & \mathbf{0} \\
         & \ddots & \\
        \mathbf{0} & & -2 \, \Func{(\nabla_{\covmatq_{T}} r_T)}{\meanvecq_T, \covmatq_T}
    \end{bmatrix}
\end{equation}
From the above we deduce that letting $\approxprec_t := -2\,\Func{(\nabla_{\covmatq_{t}} r_t)}{\meanvecq_t, \covmatq_t}$ ensures that the posterior precision under the surrogate model and the precision of the approximate posterior coincide for the optimal $q$.

Similarly, the optimal approximate posterior mean satisfies
\begin{equation}
    \precq \meanvecq = \Lambda \meanvec + [\nabla_{\mu} r]_1,
\end{equation}
where $[\nabla_{\mu} r]_1$ denotes the component of the gradient of $r$ w.r.t. $\mu$ corresponding to $\mu_1$.
Equating the optimal posterior mean under the approximate model in \cref{eqn:approx-post-model-app} with that of the optimal approximate posterior yields
\begin{equation}
    \begin{bmatrix}
        \prect_1 & & \mathbf{0} \\
        & \ddots & \\
        \mathbf{0} & & \prect_T
    \end{bmatrix} \begin{bmatrix}
        \approxobs_1 \\
        \vdots \\
        \approxobs_T
    \end{bmatrix} =
    \begin{bmatrix}
        [\nabla_{\mu} r_1]_{1} \\
        \vdots \\
        [\nabla_{\mu} r_T]_{1}
    \end{bmatrix}.
\end{equation}
This implies that $\approxobs_t := \prect_t^{-1} [\nabla_{\mu} r]_{1t}$.

\subsection{Approximate Inference via Exact Inference}
\label{sec:computing-stuff}

Recall the standard saturated bound introduced by \citet{titsias2009variational}, that is obtained at the optimal approximate posterior:
\begin{equation}
    \mathcal{L} = \log \pdfNorm{\yobs}{\meanvec_{\fobs}, \covmat_{\fobs \uobsaux} \Lambda_{\uobsaux} \covmat_{\uobsaux \fobs} + \emissionvar} - \frac{1}{2} \tr{ \inv{\emissionvar} [ \covmat_{\fobs} - \covmat_{\fobs \uobsaux} \Lambda_{\uobsaux} \covmat_{\uobsaux \fobs} ] }. \label{eqn:saturated-bound-appendix}
\end{equation}
The first term is simply to log marginal likelihood of the LGSSM defined in \cref{eqn:lgssm-separable} if $f$ is separable, or \cref{eqn:lgssm-sum-separable} if it is sum-separable.

Recall from \cref{eqn:block-diag-cond} that
\begin{align}
    \covmat_{\fobs \uobsaux} \Lambda_{\uobsaux} =&\, \begin{bmatrix}
        \Bmat_1 & & \mathbf{0} \\
        & \ddots & \\
        \mathbf{0} & & \Bmat_T
    \end{bmatrix}, \\
    \Bmat_t :=&\, \covmat_{\fobs_t \uobs_t} \Lambda_{\uobs_t} \emission_{\Mpertime D}, \nonumber
\end{align}
so the trace term in \cref{eqn:saturated-bound-appendix} can be written as
\begin{align}
    \tr{\inv{\emissionvar} [ \covmat_\fobs- \covmat_{\fobs, \uobsaux} \Lambda_{\uobsaux} \covmat_{\uobsaux, \fobs}  ]} &= \tr{\inv{\emissionvar} [ \covmat_\fobs- \covmat_{\fobs, \uobsaux} \Lambda_{\uobsaux} \covmat_{\uobsaux} \Lambda_{\uobsaux} \covmat_{\uobsaux, \fobs}  ]} \nonumber \\
    &= \sum_{t=1}^T \tr{\inv{\emissionvar_t} [ \covmat_{\fobs_t} - \Bmat_{t} \covmat_{\uobsaux_t} \Bmat_t  ]}  \label{eqn:approx-model-stats} \\
    &= \sum_{t=1}^T \tr{\inv{\emissionvar_t} [ \covmat_{\fobs_t} - \covmat_{\fobs_t \uobs_t} \Lambda_{\uobs_t} \covmat_{\uobs_t} \Lambda_{\uobs_t} \covmat_{\uobs_t \fobs_t} ]} \nonumber \\
    &= \sum_{t=1}^T \tr{\inv{\emissionvar_t} [ \covmat_{\fobs_t} - \covmat_{\fobs_t \uobs_t} \Lambda_{\uobs_t} \covmat_{\uobs_t \fobs_t} ]} \label{eqn:kernel-stats}
\end{align}

These quantities can be computed either by running the approximate model forwards through time and computing the marginal statistics using \cref{eqn:approx-model-stats}, or via the $\kernel^\spacevar$ and $\kernel^\timevar$ directly using \cref{eqn:kernel-stats}.

\newcommand{\fobsnew}[1]{\fobs_{\ast #1}}

Observe that, as with any $\fobs_t$ from the training data, the marginal distribution over some $\fobs_{\ast t}$ under the approximate posterior only involves $\uobsaux_t$ as $\fobs_{\ast t} \bigCI \uobsaux_{\backslash t} \mid \uobsaux_t$:
\begin{align}
    \qFunc{\fobsnew{t}} &= \pdfNorm{\fobsnew{t}}{ \meanq_{\fobsnew{t}}, \covmatq_{\fobsnew{t}} } \text{ where } \nonumber \\
    \meanq_{\fobsnew{t}} &:=\, \meanvec_{\fobsnew{t}} + \covmat_{\fobsnew{t} \uobs_t} \Lambda_{\uobs_t} \emission_{\uobs_t} ( \meanq_{\uobsaux_t} - \meanvec_{\uobsaux_t} ), \nonumber \\
    \covmatq_{\fobsnew{t}} &:=\, \covmat_{\fobsnew{t}} - \covmat_{\fobsnew{t} \uobs_t} \Lambda_{\uobs_t} \emission_{\uobs_t} \left[ \covmat_{\uobsaux_t} - [\precopt_{\uobsaux_t}]^{-1} \right] \transpose{\emission_{\uobs_t}} \Lambda_{\uobs_t} \covmat_{\uobs_t \fobsnew{t}}. \nonumber
\end{align}
Performing smoothing in the approximate model provides $\meanq_{\uobsaux_t}$ and $[\precopt_{\uobsaux_t}]^{-1}$, from which the optimal approximate posterior marginals are straightforwardly obtained via the above.

\section{ FITC }
\label{sec:fitc}

Consider the approximate model employed by FITC:
\begin{equation}
    \CondProbApprox{\yobs}{\uobsaux} := \pdfNorm{\yobs}{\meanvec_{\fobs} + \covmat_{\fobs \uobsaux} \Lambda_{\uobsaux} (\uobsaux - \meanvec_{\uobsaux}), \Func{\text{diag}}{\covmat_{\fobs} - \covmat_{\fobs \uobsaux} \Lambda_{\uobsaux} \covmat_{\uobsaux \fobs}} + \emissionvar }
\end{equation}
We know that in our separable setting, $\covmat_{\fobs \uobsaux} \Lambda_{\uobsaux}$ is block-diagonal from \cref{sec:block-diag-structure}. This means that the $t^{th}$ block on the diagonal of the conditional covariance matrix is
\begin{equation}
    \covmat_{\fobs_t} - \covmat_{\fobs_t \uobsaux_t} \Lambda_{\uobsaux_t} \covmat_{\uobsaux_t \fobs_t},
\end{equation}
and the entire conditional distribution factorises as follows:
\begin{equation}
    \CondProbApprox{\yobs}{\uobsaux} = \prod_{t=1}^T \pdfNorm{\yobs_t}{\meanvec_{\fobs_t} + \covmat_{\fobs_t \uobsaux_t} \Lambda_{\uobsaux_t} (\uobsaux_t - \meanvec_{\uobsaux_t}), \Func{\text{diag}}{\covmat_{\fobs_t} - \covmat_{\fobs_t \uobsaux_t} \Lambda_{\uobsaux_t} \covmat_{\uobsaux_t \fobs_t}} + \emissionvar_t}.
\end{equation}
By comparing this with equation 5 of \citep{hartikainen2011sparse}, and letting the observation model in that equation $\Cond{\yobs_k}{\vec{x}_k} = \pdfNorm{\yobs_k}{[ \Ident_N \kron \emission ] \vec{x}_k, \emissionvar_t}$, the correspondence is clear.

\section{ Inference Under Non-Gaussian Observation Models }
\label{sec:non-gaussian-likelihoods}

While the optimal approximate posterior over the pseudo-points is not Gaussian, in line with most other approximations (e.g. \citep{hensman2015scalable}) we restrict it to be so. As we have shown that the optimal approximate posterior precision is block-tridiagonal regardless the observation model, it follows that the optimal Gaussian approximation must be a Gauss-Markov model. While in general such a model has a total of $T (D \Mpertime + 2(D \Mpertime)^2)$ free (variational) parameters, in our case we know that the off-diagonal blocks of the precision are the same as in the prior, meaning that there are at most $T (D \Mpertime + (D \Mpertime)^2)$ free (variational) parameters -- this is also clear from \cref{eqn:optimal-approx-post-precision}. While one could directly parametrise the precision, this might be inconvenient from the perspective of numerical stability and implementation (standard filtering / smoothing algorithms do not work directly with the precision). Consequently, it probably makes sense to set up a surrogate model in line with that discussed by \cite{khan2017conjugate}, \cite{chang2020fast}, and \cite{ashman2020sparse}. Alternatively one could parametrise the filtering distributions directly, from which the posterior marginals could be obtained using standard smoothing algorithms.

\section{Additional Experiment Details}

\subsection{ Benchmarking Experiment }
\label{sec:additional-experimental-details}

The kernel of the GP used in all experiments is
\begin{equation}
    \Func{\kernel}{(\spacevar, \timevar), (\spaceprime, \timeprime)} = \Func{\kernel^\spacevar}{\spacevar, \spaceprime} \Func{\kernel^\timevar}{\timevar, \timeprime} \label{eqn:separable-kernel-app}
\end{equation}
where $\kernel^\spacevar$ is an Exponentiated Quadratic kernel with length scale $0.9$ and amplitude $0.92$, and $\kernel^\timevar$ is a Matern-3/2 kernel with length scale $1.2$. The particular values of the length scales / amplitudes are of little importance to the proof-of-concept experiments presented in this work -- they were chosen pseudo-randomly.

\begin{figure}[!htbp]
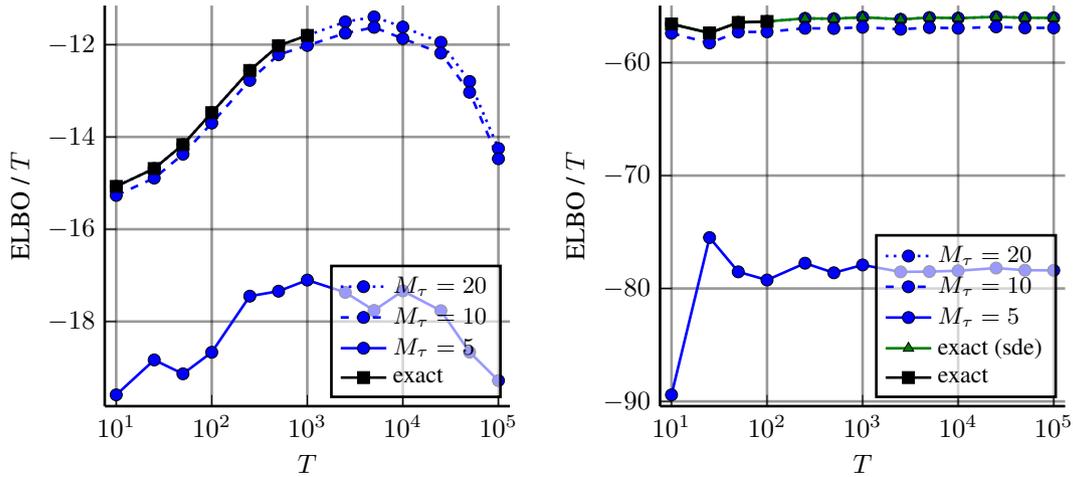

    \centering
    \includegraphics{figures/irregular_lml_plot.tikz}
    \includegraphics{figures/regular_lml_plot.tikz}
    \caption{\label{fig:lml-plot}The ELBO obtained vs the exact LML. The bound appears reasonably tight when $\Mpertime=10$ are used per time point, and very tight for $\Mpertime=20$. $\Mpertime=5$ is clearly insufficient.   }
\end{figure}

These experiments were conducted using a single thread on a 2019 MacBook Pro with 2.6~GHz CPU. Timings produced using benchmarking functionality provided by \citet{benchmarktools}.

\subsubsection{ Sum-Separable Experiments }
\label{sec:sum-separable}

Similar experiments to those in section \cref{sec:experiments} were performed with the sum-separable kernel given by adding two separable kernels of the form in \cref{eqn:separable-kernel}, although with similar length-scales and amplitudes. The results are broadly similar, although the state space approximations and state space + pseudo-point approximations take a bit longer to run as there are twice as many latent dimensions for a given number of pseudo-points than in the separable model. As before, these experiments should be thought of purely as a proof of concept.

\begin{figure}
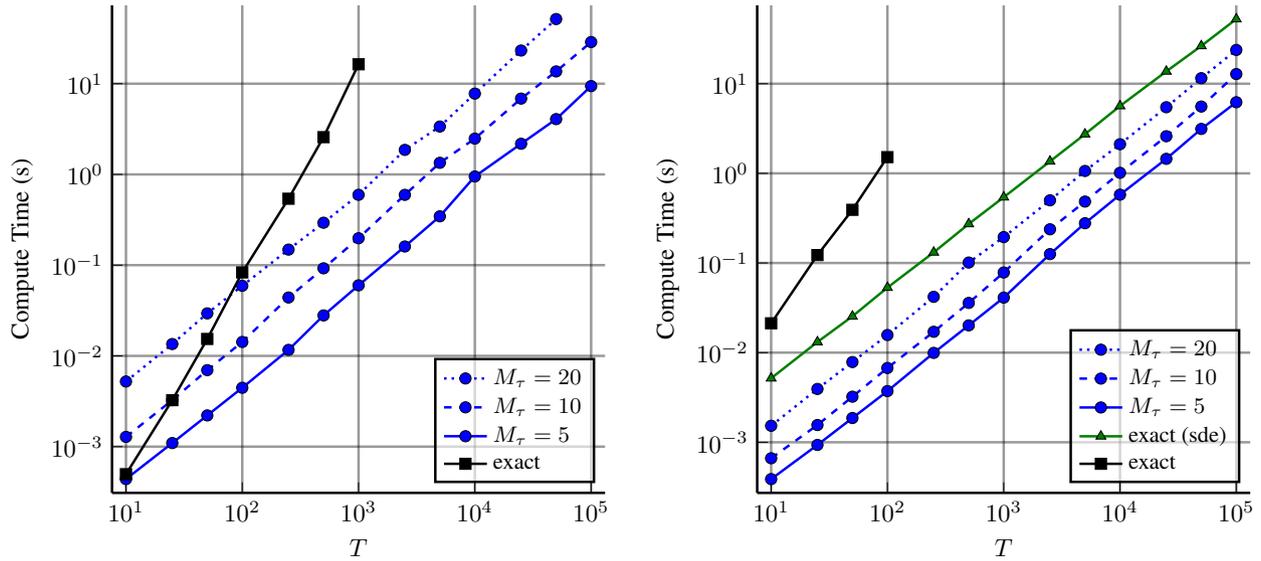

    \centering
    \includegraphics[width=0.49\textwidth]{figures/irregular_timing_plot_sum_separable.tikz}
    \includegraphics[width=0.49\textwidth]{figures/regular_timing_plot_sum_separable.tikz}
    \caption{\label{fig:sum-sep-timings}Time to compute LML exactly vs ELBO with a sum of two separable kernels. Left: irregular samples as per \cref{fig:irregular-plot}. Right: regular samples with missing data as per \cref{fig:rectilinear-plot}. Observe that, due to the increased latent dimensionality of the sum-separable model, it takes longer to compute the ELBO (and LML using the vanilla state space approximation) than in the separable case.}
\end{figure}

\begin{figure}
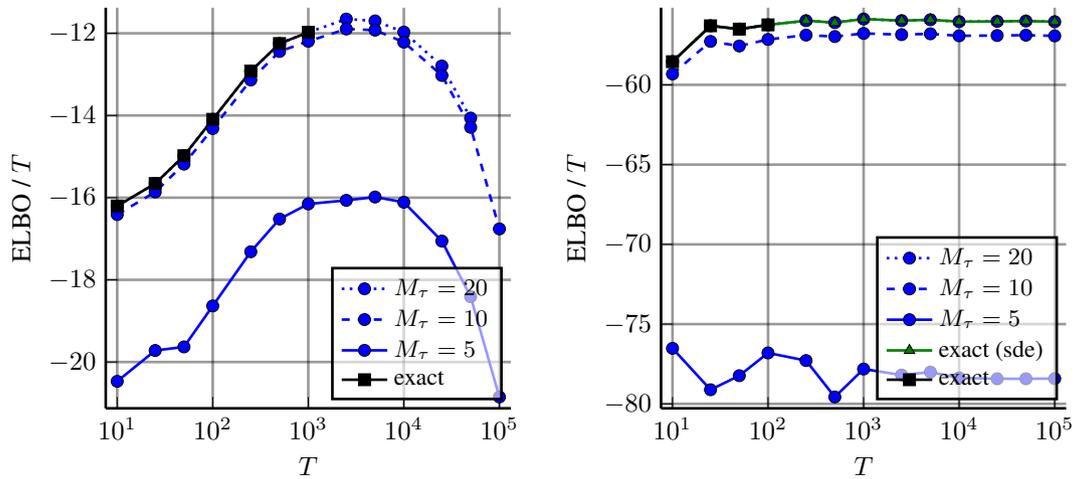

    \centering
    \includegraphics{figures/irregular_lml_plot_sum_separable.tikz}
    \includegraphics{figures/regular_lml_plot_sum_separable.tikz}
    \caption{\label{fig:sum-sep-lml}Analogue of \cref{fig:lml-plot} for \cref{fig:sum-sep-timings}. As before, $\Mpertime=5$ is clearly insufficient for accurate inference, while $\Mpertime=20$ is very close to the LML.}
\end{figure}

\subsection{Climatology Data}
\label{sec:climatology-data-extras}

The spatial locations of the pseudo-points were chosen via k-means clustering of lat-lon coordinates of sold apartments, using Clustering.jl, and were not optimised beyond that.\footnote{\url{https://github.com/JuliaStats/Clustering.jl}}

The separable kernel was
\begin{equation}
    \Func{\kernel}{(\spacevar, \timevar), (\spacevar^\prime, \timevar^\prime)} = s \, \Func{\kernel^\spacevar}{\Lambda \spacevar, \Lambda \spacevar^\prime} \Func{\kernel^{\timevar}}{\lambda \timevar, \lambda \timevar^\prime}
\end{equation}
where $\kernel_\timevar$ is a standardised Mat\'{e}rn-$\frac{5}{2}$, $\kernel_\spacevar$ is a standardised Exponentiated Quadratic, $\Lambda$ is a diagonal matrix with positive elements, $\lambda > 0$, $s > 0$. Initialisation: $\lambda = 10^{-2}$, $\Lambda_{d, d} = 1$, $s=1$, $d \in \{1, 2, 3\}$. Observation noise variance initialised to $0.5$.

The L-BFGS implementation provided by \citet{mogensen2018optim} was utilised to optimise the ELBO, with memory $M=50$ iterations, and gradients computed using the Zygote.jl algorithmic differentiation tool \citep{zygote}. All kernel parameters constrained to be positive by optimising the log of their value, and observation noise variance constrained to be in $[10^{-2}, 2]$ via a re-scaled logit transformation -- this is justifiable as the data itself was standardised to have unit variance.

The sum-separable model comprises a sum of two GPs with kernels of this form. Initialisation: $\lambda = \{10^{-3}, 10^{-1}\}$, $\Lambda_{d, d} = \{ 1.0, 5.0 \}$, $s = \{ 0.7, 0.3 \}$. The same optimisation procedure was used.

\subsection{Apartment Data}
\label{sec:apartment-data-extras}

The spatial locations of the pseudo-points were chosen via k-means clustering of lat-lon coordinates of sold apartments, using Clustering.jl, and were not optimised beyond that. $\Mpertime=75$ pseudo-points used per time point.

The separable kernel was
\begin{equation}
    \Func{\kernel}{(\spacevar, \timevar), (\spacevar^\prime, \timevar^\prime)} = s \, \Func{\kernel^\spacevar}{\Lambda \spacevar, \Lambda \spacevar^\prime} \Func{\kernel^{\timevar}}{\lambda \timevar, \lambda \timevar^\prime}
\end{equation}
where $\kernel_\timevar$ is a standardised Mat\'{e}rn-$\frac{3}{2}$, $\kernel_\spacevar$ is a standardised Exponentiated Quadratic, $\Lambda$ is a diagonal matrix with positive elements, $\lambda > 0$, $s > 0$. Initialisation: $\lambda = 10^{-2}$, $\Lambda_{d, d} = 1$, $s=1$, $d \in \{1, 2\}$. Observation noise variance initialised to $0.5$.

The L-BFGS implementation provided by \citet{mogensen2018optim} was utilised to optimise the ELBO, with memory $M=50$ iterations, and gradients computed using the Zygote.jl algorithmic differentiation tool \citep{zygote}. All kernel parameters constrained to be positive by optimising the log of their value, and observation noise variance constrained to be in $[10^{-2}, 2]$ via a re-scaled logit transformation -- this is justifiable as the data itself was standardised to have unit variance.

The sum-separable model comprises a sum of two GPs with kernels of this form. Initialisation: $\lambda = \{10^{-3}, 10^{-1}\}$, $\Lambda_{d, d} = \{ 1.0, 5.0 \}$, $s = \{ 0.7, 0.3 \}$. The same optimisation procedure was used.

\section{Efficient Inference in Linear Latent Gaussian Models}
\label{sec:efficient-inference-in-conditional}

Consider the linear-Gaussian model
\begin{align}
    \xvec &\sim \dNorm{\meanx, \covx} \nonumber \\
    \yvec \mid \xvec &\sim \dNorm{\Amat \xinp + \avec, \Qmat} \nonumber
\end{align}
where $\meanx \in \reals^{\dimx}$ and $\avec \in \reals^{\dimy}$ are vectors, $\covx$ is a $\dimx \times \dimx$ positive-definite matrix, $\Qmat$ is a $\dimy \times \dimy$ diagonal positive definite matrix, and $\Amat$ is a $\dimy \times \dimx$ matrix. We need to
\begin{enumerate}
    \item generate samples from the marginal distribution over $\Prob{\yvec}$,
    \item compute the marginals $\Prob{\yvec_n}$, $n = 1, ..., \dimy$,
    \item compute the LML $\Prob{\yvec}$, and
    \item compute the posterior distribution $\Cond{\xvec}{\yvec}$.
\end{enumerate}

All of these operations can be performed exactly in polynomial-time since $\xvec$ and $\yvec$ are jointly Gaussian distributed. However, there are two approaches for computing 1, 3, and 4, one of which will be faster depending upon $\dimx$ and $\dimy$.

In this section we analyse these approaches. We do this to prepare for deriving the additional algorithms needed to perform the above operations efficiently when $\Amat := \Bmat \Cmat$, for tall $\Bmat$ and wide $\Cmat$.

\subsection{Preliminaries}

The marginal distibution over $\yvec$ is
\begin{equation}
    \yvec \sim \dNorm{\meany, \covy}, \quad \meany := \Amat \meanx + \avec, \quad \covy := \Amat \covx \transpose{\Amat} + \Qmat \nonumber.
\end{equation}
Computing $\Amat \covx$ requires $\bigO{\dimy \dimx^2}$ operations, and $(\Amat \covx) \transpose{\Amat}$ requires $\bigO{\dimy^2 \dimx}$, so constructing the marginals takes roughly $\bigO{\dimy \dimx^2 + \dimy^2 \dimx}$ operations.

By computing $\Prob{\yvec}$ we will mean computing $\meany$ and $\covy$.

\subsection{Sampling}

First consider sampling from the marginal distribution over $\Prob{\yobs}$. The two approaches to this are:
\begin{enumerate}
    \item Ancestral sampling: first sample from $\Prob{\xvec}$ then from $\Cond{\yvec}{\xvec}$.
    \item Direct marginal sampling: compute the marginal distribution $\Prob{\yvec}$ and sample from it directly.
\end{enumerate}

Ancestral sampling requires computing the Cholesky factorisation of $\covx$, thus the overall algorithm requires $\bigO{\dimx^3 + \dimx \dimy}$ scalar operations. Conversely, computing $\Prob{\yvec}$ and sampling from it requires $\bigO{\dimy \dimx^2 + \dimy^2 \dimx + \dimy^3}$ scalar operations. So if $\dimx$ is much smaller than $\dimy$ we are better off using ancestral sampling, but if $\dimx$ is much larger than $\dimy$ then direct marginal sampling is better.

\subsection{Computing Marginal Probabilities}

To compute all $\Prob{\yvec_n}$ we must compute both $\meany$ and the diagonal of $\covy$. $\meany$ requires only $\bigO{\dimx \dimy}$ scalar operations. Once $\Amat \covx$ has been computed, obtaining the diagonal of $\covy$ requires only an additional $\bigO{\dimx \dimy}$ scalar operations, so the whole operation requires roughly $\bigO{\dimy \dimx^2}$ operations.

\subsection{Computing the Log Marginal Likelihood and Posterior}

These two operations can be performed separately, but it typically makes sense to perform them together as the majority of computational work is shared between them.

\centerline{
\begin{minipage}{0.6\linewidth}
\begin{algorithm}[H]
    \caption{LML by and posterior by factorising $\Prob{\yvec}$. Approx. number of scalar operations on the right of each line.}
    \label{alg:lml-in-y}
    \begin{algorithmic}[1]
      \STATE {\textbf{Naive-Inference}:} $\meanx$, $\covx$, $\Amat$, $\avec$, $\Qmat$, $\yvec$
      \STATE $\Vmat \gets \Amat \covx$ \hfill $\bigO{\dimy \dimx^2}$
      \STATE $\covy \gets \Vmat \transpose{\Amat} + \Qmat$ \hfill $\bigO{\dimy^2 \dimx}$
      \STATE $\Umat \gets \Func{\text{cholesky}}{\covy}$ \hfill $\bigO{\dimy^3}$
      \STATE $\Bmat \gets \invtranspose{\Umat} \Vmat$ \hfill $\bigO{\dimy^2 \dimx}$
      \STATE $\alpha \gets \invtranspose{\Umat} ( \yvec - (\Amat \meanx + \avec) )$ \hfill $\bigO{\dimy^2}$
      \STATE $\text{lml} \gets -\frac{1}{2} \left[ \dimy \log 2\pi + 2 \log \det \Umat + \transpose{\alpha} \alpha \right]$ \hfill $\bigO{\dimy}$
      \STATE $\meanxpost \gets \meanx + \transpose{\Bmat} \alpha$ \hfill $\bigO{\dimy \dimx}$
      \STATE $\covxpost \gets \covx + \transpose{\Bmat} \Bmat$ \hfill $\bigO{\dimx^2 \dimy}$
      \STATE \textbf{return} $\meanxpost, \covxpost, \text{lml}$
    \end{algorithmic}
\end{algorithm}
\end{minipage}
}

\newcommand{\UX}{\Umat_{\xvec}}
\newcommand{\UQ}{\Umat_\Qmat}
\newcommand{\Fmat}{\mathbf{F}}
\newcommand{\Gmat}{\mathbf{G}}

\centerline{
\begin{minipage}{0.8\linewidth}
\begin{algorithm}[H]
    \caption{LML and posterior by exploiting the matrix inversion and determinant lemmas. Approx. number of scalar operations on the right of each line. Note that since $\Qmat$ is diagonal, its Cholesky factorisation is also diagonal.}
    \label{alg:lml-via-inversion-lemma}
    \begin{algorithmic}[1]
      \STATE {\textbf{Low-Rank-Inference}:} $\meanx$, $\covx$, $\Amat$, $\avec$, $\Qmat$, $\yvec$
      \STATE $\UQ \gets \Func{\text{cholesky}}{\Qmat}$ \hfill $\bigO{\dimy}$
      \STATE $\UX \gets \Func{\text{cholesky}}{\covx}$ \hfill $\bigO{\dimx}$
      \STATE $\Bmat \gets \UX \transpose{\Amat} \inv{\UQ}$ \hfill $\bigO{\dimx^2 \dimy}$
      \STATE $\Umat \gets \Func{\text{cholesky}}{\Bmat \transpose{\Bmat} + \Ident}$ \hfill $\bigO{\dimx^3 + \dimx^2 \dimy}$
      \STATE $\Gmat \gets \invtranspose{\Umat} \UX$ \hfill $\bigO{\dimx^3}$
      \STATE $\covxpost \gets \transpose{\Gmat} \Gmat$ \hfill $\bigO{\dimx^3}$
      \STATE $\delta \gets \invtranspose{\UQ} ( \yvec - (\Amat \meanx + \avec) )$ \hfill $\bigO{\dimy}$
      \STATE $\beta \gets \Bmat \delta$ \hfill $\bigO{\dimy \dimx}$
      \STATE $\meanxpost \gets \meanx + \transpose{\Gmat} ( \invtranspose{\Umat} \beta )$ \hfill $\bigO{\dimx^2}$
      \STATE $\text{lml} \gets -\frac{1}{2} \left[ \transpose{\delta} \delta - \transpose{(\invtranspose{\Umat} \beta)} \invtranspose{\Umat} \beta + \dimy \log 2\pi + 2 \log \det \Umat + 2 \log \det \Qmat \right]$ \hfill $\bigO{\dimx^2}$
      \STATE \textbf{return} $\meanxpost, \covxpost, \text{lml}$
    \end{algorithmic}
\end{algorithm}
\end{minipage}
}

Note that \cref{alg:lml-in-y} and \cref{alg:lml-via-inversion-lemma} are locally scoped, so the symbols don't necessarily correspond to the same quantities. For example, $\Bmat$ is different in each algorithm.

\subsection{Bottleneck Linear-Gaussian Observation Models}

The above inference methods assume no particular structure in $\Amat$, however, recall that \cref{eqn:lgssm-separable} gives $\Amat$ at time $t$ to be
\begin{equation}
    \Amat = \covmat_{\fobs_{n, t} \uobs_{t}} \Lambda_{\uobs_t} \emission_{\uobs}
\end{equation}
where $\emission_{\uobs}$ is $M \times MD$ and $\covmat_{\fobs_{n, t} \uobs_{t}}$ is $N \times M$. Most kernels have $D > 1$, so it's worth determining whether we can exploit this structure to accelerate inference.

\newcommand{\Hmat}{\mathbf{H}}
\newcommand{\hvec}{\mathbf{h}}
\newcommand{\bvec}{\mathbf{b}}

To this end consider a model given by
\begin{align}
    \zvec :=&\, \Hmat \xvec + \hvec, \quad \xvec \sim \dNorm{\meanx, \covx} \\
    \yvec :=&\, \Bmat \zvec + \bvec + \varepsilon, \quad \varepsilon \sim \dNorm{\mathbf{0}, \Qmat}
\end{align}
where $\Hmat \in \reals^{M \times DM}$, $\hvec \in \reals^{M}$, $\Bmat \in \reals^{N \times M}$, $\bvec, \varepsilon \in \reals^{N}$, and $\Qmat \in \reals^{N \times N}$ is a positive-definite diagonal matrix.
We call this a \textit{bottleneck} model, since $\zvec$ carries all of the information in $\xvec$ needed to perform inference in $\yvec$, its dimension is less than that of $\xvec$ and $\yvec$ in the problems that we consider.

Observe that this model forms a two-state degenerate Markov chain.
\cref{alg:bottleneck-lml-posterior} exploits this Markov structure, and is able to recycle \cref{alg:lml-via-inversion-lemma} as a consequence.
It comprises three broad components: computing the marginals over $\zvec$, computing the posterior $\zvec | \yvec$, and finally computing the posterior $\xvec | \yvec$.
This last step is equivalent to performing a single step of RTS smoothing.

\newcommand{\meanz}{\meanvec_{\zvec}}
\newcommand{\covz}{\covmat_{\zvec}}
\newcommand{\meanzpost}{\meanvec_{\zvec | \yvec}}
\newcommand{\covzpost}{\covmat_{\zvec | \yvec}}
\newcommand{\dimz}{D_z}

\centerline{
\begin{minipage}{0.8\linewidth}
\begin{algorithm}[H]
    \caption{LML and posterior. Exploits the matrix inversion and determinant lemmas, and the bottlenecked structure of the model. Approx. number of scalar operations on the right of each line.}
    \label{alg:bottleneck-lml-posterior}
    \begin{algorithmic}[1]
        \STATE {\textbf{Bottleneck-Inference}:} $\meanx$, $\covx$, $\Bmat$, $\bvec$, $\Qmat$, $\Hmat$, $\hvec$, $\yvec$
        \STATE $\meanz \gets \Hmat \meanx + \hvec$ \hfill $\bigO{ \dimz \dimx }$
        \STATE $\covz \gets \Hmat \covx \transpose{\Hmat} + \hvec$ \hfill $\bigO{ \dimz \dimx^2 + \dimz^2 \dimx }$
        \STATE $\meanzpost, \covzpost, \text{lml} \gets \Func{\textbf{Low-Rank-Inference}}{\meanz, \covz, \Bmat, \bvec, \Qmat}$ \hfill $\bigO{ \dimz^2 \dimy + \dimz^3 }$
        \STATE $\Umat \gets \Func{\text{cholesky}}{\covzpost}$ \hfill $\bigO{\dimz^3}$
        \STATE $\Gmat \gets \covx \transpose{\Hmat} \inv{\Umat} \invtranspose{\Umat}$ \hfill $\bigO{\dimx^2 \dimz}$
        \STATE $\meanxpost \gets \meanx + \transpose{\Gmat} ( \meanzpost - \meanz )$ \hfill $\bigO{ \dimz \dimx }$
        \STATE $\covxpost \gets \covx + \transpose{\Gmat} (\covzpost - \covz) \Gmat$ \hfill $\bigO{ \dimz^2 \dimx + \dimx^2 }$
        \STATE \textbf{return} $\meanxpost, \covxpost, \text{lml}$
    \end{algorithmic}
\end{algorithm}
\end{minipage}
}

Observe that this algorithm exchanges $\bigO{\dimx^3}$ for $\bigO{\dimz^3}$ operations.
Recalling that $\dimx = MD$ and $\dimz = M$, we expect that this algorithm will produce better performance than \cref{alg:lml-via-inversion-lemma} for some value of $D > 1$. 
Since the exact value of $D$ at which this change will occur is unclear, and the optimal choice of algorithm for the experiments in this work depends on this, we investigate the effect of $D$, $M$, and $N$ on the performance of each algorithm in the next subsection.

\subsection{Benchmarking Inference}

\begin{figure}[!htbp]
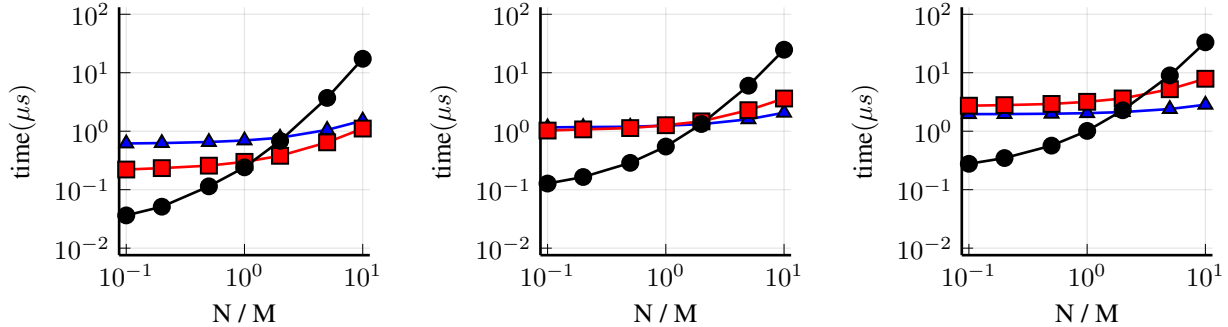

    \centering
    \includegraphics{figures/emission-benchmarks//D=1.tikz}
    \includegraphics{figures/emission-benchmarks//D=2.tikz}
    \includegraphics{figures/emission-benchmarks//D=3.tikz}
    \caption{\label{fig:emission-benchmarks}Black-circle=naive, red square=low rank, blue triangle=bottleneck. All experiments conducted using $M=100$ pseudo-points. Left: $D=1$, Middle: $D=2$, Right: $D=3$.}
\end{figure}

\cref{fig:emission-benchmarks} shows the performance of the three different algorithms for computing the LML and posterior distribution as the dimensions of the linear-Gaussian model's dimensions change.
Black lines with circles use \cref{alg:lml-in-y} (labelled \textit{naive}), red lines with squares used \cref{alg:lml-via-inversion-lemma} (labelled \textit{low rank}), and blue lines with triangles use \cref{alg:bottleneck-lml-posterior} (labelled \textit{bottleneck}).
The experiments are set up to match situations encountered in the spatio-temporal models discussed in this paper -- they are parametrised in terms of the number of $M$, $N$, and $D$. $M$ is fixed to $100$ across all three graphs, the total number of latent dimensions is $MD$ for $D \in \{ 1, 2, 3 \}$, corresponding to the Mat\'{e}rn-$1/2$, Mat\'{e}rn-$3/2$, and Mat\'{e}rn-$5/2$ kernels respectively.
The number of observations $N$ range between $0.1M = 10$ and $10M = 1000$.

As expected the \textit{naive} algorithm performs better when $N < M$, but this quickly changes when $N > M$.
For the kinds of problems encountered in this work we generally have that $N > M$, which would suggest that correct choice in our work is typically the \textit{low-rank} algorithm.
For $M=N$ the \textit{naive} algorithm tends to be faster, owing to the smaller number of operations used -- it's just a shorter algorithm than the \textit{low-rank} algorithm.

\cref{alg:bottleneck-lml-posterior}, \textit{bottleneck}, performs similarly or better than \cref{alg:lml-via-inversion-lemma} for $D=2$ and $D=3$.
The gap between the two grows as $N$ grows, suggesting that \cref{alg:bottleneck-lml-posterior} will typically be a better choice for large $M$. Indeed, even for $D=1$ the difference between the two becomes close for large $N$.

Given that $N$, $M$ and $D$ determine which of the three algorithms is optimal, one must choose appropriately for any given application.
We adopt the bottleneck algorithm in all experiments in this work because we consistently work in regimes where $N > M$ at most points in time, and we do not make use of any kernels for which $D=1$.

Also note that these results highlight that using \cref{alg:lml-via-inversion-lemma} to perform the second step in \cref{alg:bottleneck-lml-posterior} is only optimal if $N > M$.
If a problem were encountered for which $N < M$ it would be prudent to consider replacing it with \cref{alg:lml-in-y}.

\end{document}